\crefname{enumi}{}{} % part?
\newtheorem{lemma}{Lemma}[section]
\newtheorem{definition}{Definition}[section]
\crefname{proposition}{proposition}{propositions}
\crefname{figure}{fig.}{fig.}
\crefname{equation}{eq.}{eq.}
\newcommand{\noisyy}{\tilde{y}}
\newcommand{\Noisyy}{\tilde{Y}}
\newcommand{\onehoty}{e^{(y)}}
\newcommand{\onehotY}{e^{(Y)}}
\newcommand{\model}{f}
\newcommand{\truemodel}{g}
\newcommand{\noisymodel}{\tilde{g}}
\newcommand{\prob}{\mathds{P}}
\newcommand{\yspace}{\mathcal{Y}}
\newcommand{\xspace}{\mathcal{X}}
\newcommand{\modelspace}[1]{\mathcal{F}_{\mathcal{#1}}}
\newcommand{\losspace}[1]{\mathcal{L}_{\text{#1}}}
\newcommand{\argmax}[1]{\underset{#1}{\text{argmax}}\;}
\newcommand{\argmin}[1]{\underset{#1}{\text{argmin}}\;}
\newcommand{\entropy}[1]{\mathcal{H}[#1]}
\newcommand{\expectation}{\mathds{E}}
\newcommand{\loss}{\ell}
\newcommand{\risk}{\mathcal{R}_{\loss}}
\newcommand{\noisyrisk}{\tilde{\mathcal{R}}_{\mathcal{\loss}}}
\newcommand{\sforall}{\; \forall}
\newif\ifsubmission
\newif\ifarxiv
\newif\ifrenderTikz
\let\oldtikzpicture\tikzpicture
\let\oldendtikzpicture\endtikzpicture
\renewenvironment{tikzpicture}{%
    \ifrenderTikz\expandafter\oldtikzpicture%
    \else\comment%   
    \fi
}{%
    \ifrenderTikz\oldendtikzpicture%
    \else\endcomment%
    \fi
}
\title{Robustness and Reliability When Training With Noisy Labels}%
\author{
{\bf Amanda Olmin} \\
amanda.olmin@liu.se \\
Link\"oping University, Sweden\\
\And
{\bf Fredrik Lindsten} \\
fredrik.lindsten@liu.se \\
Link\"oping University, Sweden\\
}
\date{}
\begin{document}

\onecolumn{
	\noindent{}This is an edited version of the paper \textit{Robustness and Reliability When Training With Noisy Labels} published in \textit{Proceedings of The 25th International Conference on Artificial Intelligence and Statistics}, 2022.
	\\
	
	\noindent Please cite as:
	\small{
		\begin{verbatim}
			@InProceedings{Olmin2022,
				author = {{O}lmin, {A}manda and {L}indsten, {F}redrik},
				booktitle = {{I}nternational {C}onference on {A}rtificial {I}ntelligence and {S}tatistics},
				title = {{R}obustness and {R}eliability {W}hen {T}raining {W}ith {N}oisy {L}abels}},
			pages = 922-942,
			year = {2022}
		}
\end{verbatim}}

%\lstinputlisting[language=BibTeX]{}
}
\newpage

\maketitle

\begin{refsection}
\begin{abstract}
 Labelling of data for supervised learning can be costly and time-consuming and the risk of incorporating label noise in large data sets is imminent. When training a flexible discriminative model using a strictly proper loss, such noise will inevitably shift the solution towards the conditional distribution over noisy labels. Nevertheless, while deep neural networks have proven capable of fitting random labels, regularisation and the use of robust loss functions empirically mitigate the effects of label noise. However, such observations concern robustness in accuracy, which is insufficient if reliable uncertainty quantification is critical. We demonstrate this by analysing the properties of the conditional distribution over noisy labels for an input-dependent noise model. In addition, we evaluate the set of robust loss functions characterised by noise-insensitive, asymptotic risk minimisers. We find that strictly proper and robust loss functions both offer asymptotic robustness in accuracy, but neither guarantee that the final model is calibrated. Moreover, even with robust loss functions, overfitting is an issue in practice. With these results, we aim to explain observed robustness of common training practices, such as early stopping, to label noise. In addition, we aim to encourage the development of new noise-robust algorithms that not only preserve accuracy but that also ensure reliability.

\end{abstract}

\ifarxiv
\section{Introduction and preview of contribution}
\else
\section{Introduction and\\ preview}
\fi
\label{sec:introduction}

Deep neural networks have been successfully applied in many fields. However, because of their high complexity, they typically require a large amount of data for training. The process of annotating a large and possibly high-dimensional data set is costly, time-consuming and risks incorporating label noise in the training data set. For difficult boundary cases, even expert annotators can disagree about the true label. Evidently, as supervised training of discriminative models is highly dependent on the existence of labelled data, label noise poses a risk of hurting model performance. Firstly, the noise can cause the asymptotic risk minima to shift, resulting in a model approximating the conditional distribution over noisy, instead of clean, labels. 
Secondly, a flexible model might overfit to the noise in the data. % \parencite{Zhang2017}.

Central to the training of probabilistic predictive models are so called proper loss functions \parencite{Gneiting2007, Reid2010}. A loss function is proper if it, for class variable $Y$ and input $X$, is asymptotically minimised by the true conditional probability $\model^*(x) = \prob(Y \mid X=x)$. Moreover, if the (asymptotic) risk minimiser $\prob(Y\mid X)$ is unique, the loss is \emph{strictly proper}. Strictly proper loss functions, such as the commonly used categorical cross-entropy loss, encourage reliable uncertainty quantification during training. Using such a loss function, at least there is asymptotic, theoretical grounds for obtaining a model close to $\prob(Y  \mid X)$. A weaker, and perhaps more realistic, reliability condition than requiring that the model recovers the true class probability, is the notion of calibration. A model is calibrated if it reports class probabilities that agree with the observed prediction error frequency (see \cref{def:calibrated_def}). Note that $\prob(Y \mid X)$ is calibrated by definition.

In the presence of noisy labels, $\Noisyy$, the asymptotic risk minimiser of a strictly proper loss function is $\tilde \model^*(x) = \prob(\Noisyy  \mid X=x)$ instead of $\prob(Y \mid X=x)$. Hence, loss functions that are insensitive to noise have been proposed as alternatives, e.g. in \parencite{Ghosh2015, Ghosh2017, Charoenphakdee2019, Zhang2018, Wang2019}. Specifically, fully \emph{robust loss functions} are defined as having risk minimisers that are unaffected by label noise under certain assumptions on the noise distribution \parencite{Ghosh2015, Ghosh2017}. 
Although such a condition is sufficient for achieving robustness in accuracy, %\parencite{Ghosh2015, Ghosh2017}, 
it does not imply that the risk minimisers are reliable. This is a conceivable issue since uncertainty quantification is critical in many applications of machine learning. Intuitively, accurate reasoning about uncertainties becomes even more relevant in the case of noisy annotations. 
%Important in the context of uncertainty quantification, is the notion of calibration \parencite{Brocker2009, Vaicenavicius2019}. 

In practice, the risk minimiser of a loss is only part of the story. As modern neural networks are often of high capacity, they are capable of overfitting to training data, also when labels are afflicted by noise \parencite{Zhang2017}. Conceptually, training with any loss function corresponds to a, possibly implicit, assumption that the training trajectory will pass "close" to the risk minimiser $\model^*$, before drifting off into overfitting. We illustrate this in \cref{fig:intuition_sketch_a} where we think of the training dynamics as consisting of two phases: a convergent phase where the model approaches $\model^*$, followed by a divergent (overfitting) phase where the distance between the model and the risk minimiser increases. We will not elaborate on the details of the training dynamics, nor characterise what "close" means. Still, we argue that this assumption underlies common practices of training neural networks, such as using early stopping to halt the trajectory as close as possible to $\model^*$. It is further supported by observations that overparameterised models tend to learn general patterns in the training data before overfitting to noisy examples \parencite{Arpit2017}.
We will use this assumption of two phases of training to illustrate some of the key points in the paper.

\begin{figure*}[t]
    \begin{subfigure}{0.33\textwidth}
        \ifarxiv
\newcommand\figscale{0.55}
\else
\newcommand\figscale{0.55}
\fi

\begin{tikzpicture}[scale = \figscale]

\pgfmathsetseed{236}
%\draw [black, very thick, decorate, decoration={random steps, segment length=0.1cm, amplitude=.01cm}]  (0,0) circle [x radius=3cm, y radius=3.5cm, rotate=-45] ;

\definecolor{color0}{rgb}{0.705673158,0.01555616,0.150232812}
\definecolor{color1}{rgb}{0.2298057,0.298717966,0.753683153}

\draw[black] plot[smooth cycle, thick, tension=.8]
  coordinates{(0,1)  (3,2.5) (6,2) (6.5,0) (4,-1.5) (1,-1.6)};

\filldraw[black] (1,0) circle (2pt) node[anchor=center] (f0) {\normalsize{$\model^{0}\;\;\;\;\;$}};
\filldraw[black] (2.7,0.3) circle (2pt) node[anchor=center] (fs) {};
\node[anchor=south] at (2.6, 0) (fst) {\normalsize{$\model^{*}\;$}};
\filldraw[black] (4,1.9) circle (2pt) node[anchor=center] (fb) {};
%\node[anchor=center] at (4,1.8) (fbt) {\normalsize{}}; % $\model^{\#}\;\;\;\;\;$

\node[thick,] at (7, -1) (space) {\large{$\mathcal{F}$}};

%\draw [color1, decorate, decoration={random steps, segment length=0.5cm, amplitude=.1cm}] plot [smooth, tension=2] coordinates { (1,0) (1,1) (2,-2) (2.2,0.3)};

\node (start) at ($(f0)$) {};
\def\step{0.9}
\def\minstep{0.3}
\foreach \i in {1,2} { % 
        \node (end) at ($(start)+({max(\minstep,\step*rnd)}, \step*rnd-\step*0.9)$) {};
        
        \ifodd\i 
        \draw[->, color1, thick] (start) -- (end.center);
        \else 
        \draw[->, color1, thick] (start) -- (end.center);
        \fi
        
        \node (start) at (end.center) {};
        \pgfmathparse{\step-0.05}
        \xdef\step{\pgfmathresult}
}

\node (fsa) at ($(fs) + (0.1, -0.15)$) {};
\draw[->, color1, thick] (end) -- (fsa.center);

\node (start) at ($(fsa)$) {};
\foreach \i in {1,...,3} {
        \node (end) at ($(start)+({max(\minstep,\step*rnd)}, \step*rnd*2.5)$) {};
        %\ifodd\i 
        \draw[->, color1, thick] (start) -- (end.center);
        %\else 
        %\draw[-, color1, thick] (start) -- (end.center);
        %\fi
        
        \node (start) at (end) {};
        \pgfmathparse{\step-0.1}
        \xdef\step{\pgfmathresult}
}
\draw[->, color1, thick] (end) -- (fb.center);

\end{tikzpicture}
        \caption{}
        \label{fig:intuition_sketch_a}
    \end{subfigure}%
    \begin{subfigure}{0.33\textwidth}
        \ifarxiv
\newcommand\figscale{0.55}
\else
\newcommand\figscale{0.55}
\fi

\begin{tikzpicture}[scale = \figscale]
\pgfmathsetseed{236}
%\draw [black, very thick, decorate, decoration={random steps, segment length=0.1cm, amplitude=.01cm}]  (0,0) circle [x radius=3cm, y radius=3.5cm, rotate=-45] ;

\definecolor{color0}{rgb}{0.89904617,0.439559467,0.343229596}
\definecolor{color1}{rgb}{0.2298057,0.298717966,0.753683153}

\draw[black] plot[smooth cycle, thick, tension=.8]
  coordinates{(0,1)  (3,2.5) (6,2) (6.5,0) (4,-1.5) (1,-1.6)};

\filldraw[black] (1,0) circle (2pt) node[anchor=center] (f0) {\normalsize{$\model^{0}\;\;\;\;\;$}};
\filldraw[black] (2.7,0.3) circle (2pt) node[anchor=center] (fs) {};
\node[anchor=south] at (2.6, 0) (fst) {\normalsize{$\model^{*}\;$}};
\filldraw[black] (4,1.9) circle (2pt) node[anchor=center] (fb) {\normalsize{}}; %$\model^{\#}\;\;\;\;\;$

\filldraw[black] (2.7,-0.4) circle (2pt) node[anchor=center] (fsn) {};
\node[anchor=north] at (2.5,-0.3) (fsnt) {\normalsize{$\tilde{\model}^{*}$}};
\filldraw[black] (4.8,-0.2) circle (2pt) node[anchor=center] (fbn) {};
\node[anchor=west] at (4.7,-0.2) (fbnt) {\normalsize{}}; % $\tilde{\model}^{\#}$

\node[thick,] at (7, -1) (space) {\large{$\mathcal{F}$}};

%\draw [color1, decorate, decoration={random steps, segment length=0.5cm, amplitude=.1cm}] plot [smooth, tension=2] coordinates { (1,0) (1,1) (2,-2) (2.2,0.3)};

\node (start) at ($(f0)$) {};
\def\step{0.9}
\def\minstep{0.3}
\foreach \i in {1,2} { % 
        \node (end) at ($(start)+({max(\minstep,\step*rnd)}, \step*rnd-\step*0.9)$) {};
        
        %\ifodd\i 
        %\draw[->, color1, thick] (start) -- (end.center);
        %\else 
        \draw[->, color1, thick] (start) -- (end.center);
        %\fi
        
        \node (start) at (end.center) {};
        \pgfmathparse{\step-0.05}
        \xdef\step{\pgfmathresult}
}

\node (fsa) at ($(fs) + (0.1, -0.15)$) {};
\draw[->, color1, thick] (end) -- (fsa.center);

\node (start) at ($(fsa)$) {};
\foreach \i in {1,...,3} {
        \node (end) at ($(start)+({max(\minstep,\step*rnd)}, \step*rnd*2.5)$) {};
        %\ifodd\i 
        \draw[->, color1, thick] (start) -- (end.center);
        %\else 
        %\draw[-, color1, thick] (start) -- (end.center);
        %\fi
        
        \node (start) at (end) {};
        \pgfmathparse{\step-0.1}
        \xdef\step{\pgfmathresult}
}
\draw[->, color1, thick] (end) -- (fb.center);

\def\step{0.9}
\node (start) at ($(f0)$) {};
\foreach \i in {1,...,2} {
        \node (end) at ($(start)+({max(\minstep+0.25,\step*rnd)}, \step*rnd-\step*0.5)$) {};
        
       %\ifodd\i 
        \draw[->, color0, thick] (start) -- (end.center);
        %\else 
        %\draw[-, color0, thick] (start.center) -- (end.center);
        %\fi
        
        \node (start) at (end) {};
        \pgfmathparse{\step-0.1}
        \xdef\step{\pgfmathresult}
}

\node (fsna) at ($(fsn) + (0, 0.2)$) {};
\draw[->, color0, thick] (end) -- (fsna.center);

\node (start) at ($(fsna)$) {};
\foreach \i in {1,...,3} {
        \node (end) at ($(start)+({max(\minstep+0.2,\step*rnd)}, \step*rnd - 0.7*\step)$) {};
        
        %\ifodd\i 
        \draw[->, color0, thick] (start) -- (end.center);
        %\else 
        %\draw[-, color0, thick] (start.center) -- (end.center);
        %\fi
        
        \node (start) at (end) {};
        \pgfmathparse{\step-0.1}
        \xdef\step{\pgfmathresult}
}
\draw[->, color0, thick] (end) -- (fbn.center);

\end{tikzpicture}
        \caption{}
        \label{fig:intuition_sketch_b}
    \end{subfigure}%
    \begin{subfigure}{0.33\textwidth}
        \ifarxiv
\newcommand\figscale{0.55}
\else
\newcommand\figscale{0.55}
\fi

\begin{tikzpicture}[scale = \figscale]

\pgfmathsetseed{236}
%\draw [black, very thick, decorate, decoration={random steps, segment length=0.1cm, amplitude=.01cm}]  (0,0) circle [x radius=3cm, y radius=3.5cm, rotate=-45] ;

\definecolor{color0}{rgb}{0.89904617,0.439559467,0.343229596}
\definecolor{color1}{rgb}{0.2298057,0.298717966,0.753683153}

\draw[black] plot[smooth cycle, thick, tension=.8]
  coordinates{(0,1)  (3,2.5) (6,2) (6.5,0) (4,-1.5) (1,-1.6)};

\filldraw[black] (1,0) circle (2pt) node[anchor=center] (f0) {\normalsize{$\model^{0}\;\;\;\;\;$}};
\filldraw[black] (2.7,0.3) circle (2pt) node[anchor=center] (fs) {};
\node[anchor=south] at (2.6, 0) (fst) {\normalsize{$\model^{*}\;$}};

\filldraw[black] (4,1.9) circle (2pt) node[anchor=center] (fb) {}; % \normalsize{$\model^{\#}\;\;\;\;\;$

\filldraw[black] (4.8,-0.2) circle (2pt) node[anchor=center] (fbn) {};
\node[anchor=west] at (4.7,-0.2) (fbnt) {}; % \normalsize{$\tilde{\model}^{\#}$}

\node[thick,] at (7, -1) (space) {\large{$\mathcal{F}$}};

%\draw [color1, decorate, decoration={random steps, segment length=0.5cm, amplitude=.1cm}] plot [smooth, tension=2] coordinates { (1,0) (1,1) (2,-2) (2.2,0.3)};

\node (start) at ($(f0)$) {};
\def\step{0.9}
\def\minstep{0.3}
\foreach \i in {1,2} { % 
        \node (end) at ($(start)+({max(\minstep,\step*rnd)}, \step*rnd-\step*0.9)$) {};
        
        %\ifodd\i 
        \draw[->, color1, thick] (start) -- (end.center);
        %\else 
        %\draw[->, color1, thick] (start) -- (end.center);
        %\fi
        
        \node (start) at (end.center) {};
        \pgfmathparse{\step-0.05}
        \xdef\step{\pgfmathresult}
}

\node (fsa) at ($(fs) + (0.1, -0.15)$) {};
\draw[->, color1, thick] (end) -- (fsa.center);

\node (start) at ($(fsa)$) {};
\foreach \i in {1,...,3} {
        \node (end) at ($(start)+({max(\minstep,\step*rnd)}, \step*rnd*2.5)$) {};
        %\ifodd\i 
        \draw[->, color1, thick] (start) -- (end.center);
        %\else 
        %\draw[-, color1, thick] (start) -- (end.center);
        %\fi
        
        \node (start) at (end) {};
        \pgfmathparse{\step-0.1}
        \xdef\step{\pgfmathresult}
}
\draw[->, color1, thick] (end) -- (fb.center);

\def\step{0.9}
\node (start) at ($(f0)$) {};
\foreach \i in {1,...,2} {
        \node (end) at ($(start)+({max(\minstep+0.25,\step*rnd)}, \step*rnd-\step*0.4)$) {};
        
       %\ifodd\i 
        \draw[->, color0, thick] (start) -- (end.center);
        %\else 
        %\draw[-, color0, thick] (start.center) -- (end.center);
        %\fi
        
        \node (start) at (end) {};
        \pgfmathparse{\step-0.08}
        \xdef\step{\pgfmathresult}
}

\node (fsna) at ($(fs) + (0, -0.15)$) {};
\draw[->, color0, thick] (end) -- (fsna.center);

\node (start) at ($(fsna)$) {};
\foreach \i in {1,...,3} {
        \node (end) at ($(start)+({max(\minstep+0.2,\step*rnd)}, \step*rnd - 0.8*\step)$) {};
        
        %\ifodd\i 
        \draw[->, color0, thick] (start) -- (end.center);
        %\else 
        %\draw[-, color0, thick] (start.center) -- (end.center);
        %\fi
        
        \node (start) at (end) {};
        \pgfmathparse{\step-0.1}
        \xdef\step{\pgfmathresult}
}
\draw[->, color0, thick] (end) -- (fbn.center);

\end{tikzpicture}
        \caption{}
        \label{fig:intuition_sketch_c}
    \end{subfigure}
    \caption{(a) Sketch of trajectory in model space $\modelspace{}$ during training. In the convergent phase, the trajectory "aims" towards the risk minimiser $\model^*$. In the divergent (overfitting) phase, the distance between the model and the risk minimiser increases. (b) When labels are noisy, the trajectory aims towards the noisy risk minimiser $\tilde{\model}^*$. Using a strictly proper loss, the clean and noisy risk minimisers differ. (c) Using a robust loss, the aim of the convergent phase is the same under the clean and noisy data distributions, but overfitting is still possible.}
    \label{fig:intuition_sketch}
\end{figure*}
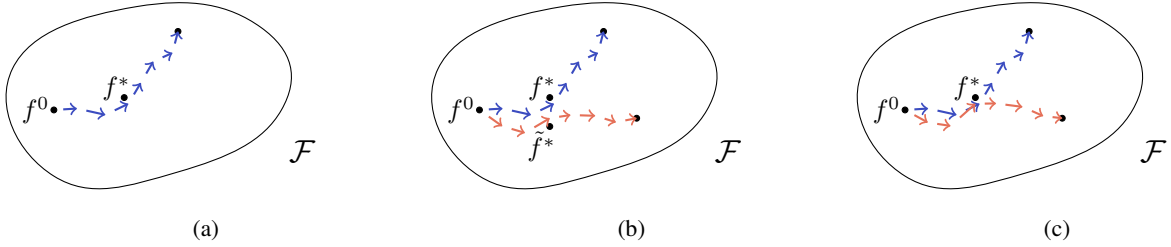

In the view of \cref{fig:intuition_sketch_b}, using a strictly proper loss function, the convergent phase of training  will "aim" towards $\model^{*}(x) = \prob(Y \mid X=x)$ in the case of noise-free data and towards $\tilde{\model}^{*}(x) = \prob(\Noisyy \mid X=x)$ if labels are noisy.  
As a first contribution of this paper we therefore:
\ifarxiv
\vspace{-1ex}
\else
\vspace{-5ex}
\fi
\paragraph{Characterise the Properties of $\mathbf{\mathbbb{P}(\Noisyy \mid X)}$ Relative to $\mathbf{\mathbbb{P}(Y \mid X)}$.} We show (\cref{prop:decision_boundaries}) that under the commonly used symmetric label noise assumption, as well as for a more realistic input-dependent noise, $\tilde\model^*(x) = \prob(\Noisyy \mid X=x)$ shares decision boundaries with $\model^*(x) = \prob(Y \mid X=x)$. This is encouraging if accuracy is the main quantity of interest, and can explain observed robustness in accuracy of neural networks trained with regularisation techniques such as early stopping \parencite{Li2020} and pre-training \parencite{Hendrycks2019}. 
Still, for reliable uncertainty quantification, this is not enough. Considering the two risk minimisers, we show (\cref{prop:conditional_entropy}) that $\tilde{\model}^*$ has higher entropy than $\model^*$. While this is perhaps not surprising, since the additional noise can only increase the entropy, it still shows that we will not be able to recover the true probability $\prob(Y  \mid X)$, despite the apparent robustness in accuracy. Of a higher interest, and with more severe practical implications, is that the noisy risk minimiser $\tilde \model^*$ is not only insufficient for recovering $\prob(Y  \mid X)$, but it also fails to be calibrated (\cref{prop:noisyy_calibration}). We demonstrate in \cref{fig:intro_example} how the predicted class probability of a simple neural network trained with early stopping is affected by the presence of symmetric label noise in the training data. The models trained with and without label noise have similar accuracy on clean test data (0.984 vs. 0.992), but the uncertainty clearly increases with the addition of the noise. 
%\hfill$\square$

%\vspace{2ex}
\noindent
Building on these findings, as a second contribution we also:
\ifarxiv
\vspace{-2ex}
\else
\vspace{-5ex}
\fi
\paragraph{Critically Review the Use of Robust Loss Functions.}% for Training Deep Neural Networks.} 
Employing a robust loss function \parencite{Ghosh2015, Ghosh2017}, means that $\tilde \model^* = \model^*$, i.e., as illustrated in \cref{fig:intuition_sketch_c}, the training trajectories "aim" for the same point in the initial training phase. However, we argue that this is not enough when it comes to reliability. Indeed, we show (\cref{prop:robustness_strictly_proper}) that robust loss functions are never strictly proper, so we can not expect them to accurately recover $\prob(Y  \mid X)$. Furthermore, to relax this strong requirement, we define a weaker notion of a calibration-based strictly proper loss function (\cref{def:calibration_based_sp}), and show (\cref{prop:robustness_calibrated}) that the robustness condition is insufficient for a loss function to be calibration-based strictly proper. Specifically, the set of symmetric, robust loss functions \parencite{Ghosh2015, Ghosh2017} are never calibration-based strictly proper. This is of high relevance since, to our knowledge, this is the only identified class of loss functions that are robust to simple non-uniform label noise and that also does not require estimation of the noise distribution. 

Finally, we demonstrate empirically (\cref{sec:mae_test}) that models trained with robust loss functions are \emph{not} robust to overfitting and that any observed robustness does not follow from the theory. Indeed, robustness, in this regard, is a property related to the asymptotic risk minimiser $\model^*$. In practice, 
a model can overfit to the noise in the training data even when a robust loss function is used. 

In summary, loss robustness concerns the risk minimiser $\model^*$ which determines the "aim" of the convergent phase of the learning trajectory. However, our results show that there is limited theoretical support for why this target point should be any better using a robust (\cref{fig:intuition_sketch_c}) compared to a strictly proper loss function (\cref{fig:intuition_sketch_b}). Specifically, in the case of a symmetric, robust loss functions, it holds that both are robust in terms of accuracy, none of them are robust when it comes to uncertainty quantification (whether we use the stronger notion of recovering $\prob(Y  \mid X)$ or the weaker notion of calibration), 
and the practical issue of potentially overfitting to label noise remains in both cases.

Furthermore, while our results can be used to explain perceived robustness of specific training algorithms, they also point towards a weakness in evaluating robustness solely in terms of accuracy. They demonstrate that robustness in accuracy does not imply reliability, or, more specifically, robustness in uncertainty quantification.
At the same time, uncertainty-agnostic metrices such as accuracy are commonly used to evaluate robustness against label noise \parencite{Song2020} while, to our understanding, uncertainty quantification is consistently overlooked. 
Our conclusion is that further investigation is needed to better understand the effect of label noise on model reliability, as well as the training dynamics. Moreover, we suggest that uncertainty quantification should have a natural part in the evaluation of noise-robust training algorithms. For future work, one potential direction is that of developing loss functions that ensure robustness in accuracy as well as calibration-based strictly properness.

\begin{figure*}[t]
\centering
    \tikzset{
  font={\fontsize{16pt}{12}\selectfont}}

%\definecolor{color0}{rgb}{0.12156862745098,0.466666666666667,0.705882352941177}
%\definecolor{color1}{rgb}{1,0.498039215686275,0.0549019607843137}
\definecolor{color0}{rgb}{0.705673158,0.01555616,0.150232812}
\definecolor{color1}{rgb}{0.2298057,0.298717966,0.753683153}

\ifarxiv
\newcommand\vertsep{4}
\else
\newcommand\vertsep{4}
\fi

\begin{tikzpicture}[scale=0.45] % 0.5 for original aritcle!!!

\begin{groupplot}[group style={group size=3 by 1, group name=toy example plots, horizontal sep =\vertsep cm,vertical sep =2 cm }]
%\addplot graphics [includegraphics cmd=\pgfimage,xmin=-1.525, xmax=1.475, ymin=-1.525, ymax=1.475] {experiments/figures/raw_figures/circle_data_02.png};

\nextgroupplot[
legend cell align={left},
legend style={fill opacity=0.8, draw opacity=1, text opacity=1, draw=white!80!black},
tick align=outside,
tick pos=left,
x grid style={white!69.0196078431373!black},
xlabel={Dim 1},
xlabel style={yshift=-0.2cm},
xmin=-1.525, xmax=1.475,
xtick style={color=black},
y grid style={white!69.0196078431373!black},
ylabel={Dim 2},
ylabel style={yshift=-0.2cm},
legend image post style={scale=8},
ymin=-1.525, ymax=1.475,
ytick style={color=black}
]

\input{background/figures/raw_figures/toy_data_2}

\nextgroupplot[
tick align=outside,
tick pos=left,
title={$f_1(x)$},
x grid style={white!69.0196078431373!black},
xlabel={Dim 1},
xlabel style={yshift=-0.2cm},
xmin=-1.505, xmax=1.495,
xtick style={color=black},
y grid style={white!69.0196078431373!black},
ylabel={Dim 2},
ylabel style={yshift=-0.2cm},
ymin=-1.505, ymax=1.495,
ytick style={color=black}
]
\addplot graphics [includegraphics cmd=\pgfimage,xmin=-1.505, xmax=1.495, ymin=-1.505, ymax=1.495] {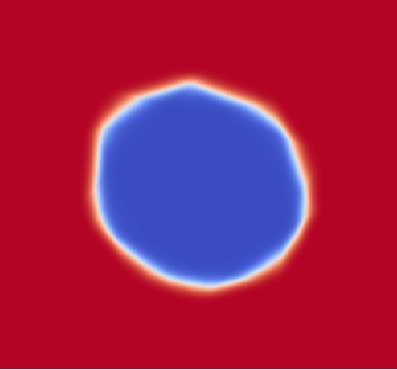};

\nextgroupplot[
colorbar,
colorbar style={ylabel={}},
colormap={mymap}{[1pt]
  rgb(0pt)=(0.2298057,0.298717966,0.753683153);
  rgb(1pt)=(0.26623388,0.353094838,0.801466763);
  rgb(2pt)=(0.30386891,0.406535296,0.84495867);
  rgb(3pt)=(0.342804478,0.458757618,0.883725899);
  rgb(4pt)=(0.38301334,0.50941904,0.917387822);
  rgb(5pt)=(0.424369608,0.558148092,0.945619588);
  rgb(6pt)=(0.46666708,0.604562568,0.968154911);
  rgb(7pt)=(0.509635204,0.648280772,0.98478814);
  rgb(8pt)=(0.552953156,0.688929332,0.995375608);
  rgb(9pt)=(0.596262162,0.726149107,0.999836203);
  rgb(10pt)=(0.639176211,0.759599947,0.998151185);
  rgb(11pt)=(0.681291281,0.788964712,0.990363227);
  rgb(12pt)=(0.722193294,0.813952739,0.976574709);
  rgb(13pt)=(0.761464949,0.834302879,0.956945269);
  rgb(14pt)=(0.798691636,0.849786142,0.931688648);
  rgb(15pt)=(0.833466556,0.860207984,0.901068838);
  rgb(16pt)=(0.865395197,0.86541021,0.865395561);
  rgb(17pt)=(0.897787179,0.848937047,0.820880546);
  rgb(18pt)=(0.924127593,0.827384882,0.774508472);
  rgb(19pt)=(0.944468518,0.800927443,0.726736146);
  rgb(20pt)=(0.958852946,0.769767752,0.678007945);
  rgb(21pt)=(0.96732803,0.734132809,0.628751763);
  rgb(22pt)=(0.969954137,0.694266682,0.579375448);
  rgb(23pt)=(0.966811177,0.650421156,0.530263762);
  rgb(24pt)=(0.958003065,0.602842431,0.481775914);
  rgb(25pt)=(0.943660866,0.551750968,0.434243684);
  rgb(26pt)=(0.923944917,0.49730856,0.387970225);
  rgb(27pt)=(0.89904617,0.439559467,0.343229596);
  rgb(28pt)=(0.869186849,0.378313092,0.300267182);
  rgb(29pt)=(0.834620542,0.312874446,0.259301199);
  rgb(30pt)=(0.795631745,0.24128379,0.220525627);
  rgb(31pt)=(0.752534934,0.157246067,0.184115123);
  rgb(32pt)=(0.705673158,0.01555616,0.150232812)
},
point meta max=1,
point meta min=0,
tick align=outside,
tick pos=left,
title={$f_1(x)$},
x grid style={white!69.0196078431373!black},
xlabel={Dim 1},
xlabel style={yshift=-0.2cm},
xmin=-1.505, xmax=1.495,
xtick style={color=black},
y grid style={white!69.0196078431373!black},
ylabel={Dim 2},
ylabel style={yshift=-0.2cm},
ymin=-1.505, ymax=1.495,
ytick style={color=black}
]
\addplot graphics [includegraphics cmd=\pgfimage,xmin=-1.505, xmax=1.495, ymin=-1.505, ymax=1.495] {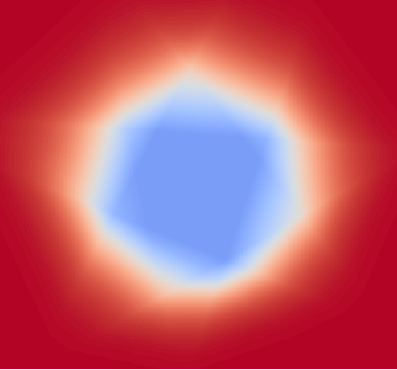};

\end{groupplot}
\end{tikzpicture}
    \caption{Predicted class 1 probabilities, $f_1(x)$, over the input space, of models trained on circle data. Left: clean training data \parencite{Pedregosa2011}. Middle: model trained on clean data. Right: model trained on noisy data with label flip probability 0.2.}
    \label{fig:intro_example}    
\end{figure*}

\section{Preliminaries}
\label{sec:preliminaries}
We will consider classification problems with $K$ classes, input variable $X \in \xspace$ % \subseteq \mathbb{R}^d$ 
and label $Y \in \yspace = \{1, \dots, K\}$. 
We will denote the true vector of probabilities over outcomes in $\yspace$ given the observation $X=x$ by $\truemodel(x)$ with elements $\truemodel_k (x) = \prob(Y=k \mid X=x)$. When there is label noise in the data, we observe $\Noisyy \in \yspace$ in place of $Y$ and $\noisymodel(x)$ is used to denote the probability vector with entries $\noisymodel_k (x) = \prob(\Noisyy=k \mid X=x)$. 

For the generative process of $\Noisyy$, we will consider a version of input-dependent noise referred to as \textit{simple non-uniform} label noise \parencite{Ghosh2017}. 

\begin{definition} [Simple non-uniform label noise, see e.g. \cite{Ghosh2017}] For simple non-uniform label noise,
\begin{align*}
    \prob(\Noisyy=\noisyy \mid Y=y, X=x) = 
    \begin{cases}
        1 - \omega(x), \quad\;\; \text{if} \; \noisyy=y \\
        \frac{\omega(x)}{K-1}, \qquad\;\;\;\;\;\! \text{otherwise}
    \end{cases}
\end{align*}
where the flip probabilities are defined by an input-dependent noise parameter $0 \leq \omega(x) < \frac{K-1}{K}$.
\label{def:simple_non_uniform_noise}
\end{definition}
In the literature, it is common to assume that label noise is input-independent, see \parencite{Song2020} and the references therein. Hence, although referred to as "simple" the simple non-uniform label noise assumption is still more complex than what is often assumed, since it allows the noise to vary across the input space. In parallel, this noise assumption does not exclude the possibility of input-independence. For instance, the frequently used \textit{symmetric} label noise \parencite{Song2020}, for which $\omega(x) = \omega \sforall x$, is a special case of simple non-uniform label noise.

The bounds on $\omega(x)$ given in \cref{def:simple_non_uniform_noise} will be implicitly assumed throughout the paper. The upper bound on the noise parameter $\omega(x)$ ensures that the noisy label has a higher probability of being equal to the true label than it has of being equal to any other label. Theoretically, this will preserve the dominant label in each cluster of a sampled data set. Throughout the paper, we will assume that $\noisymodel(X) \neq \truemodel(X)$ with probability larger than 0. For simple non-uniform label noise, this is equivalent to assuming that $\prob\left(\{\omega(X) > 0\} \cap \{\truemodel(X) \neq \frac{1}{K} \cdot \mathbf{1}_K\} \right) > 0$, where $\mathbf{1}_K$ is the vector of ones of size $K$. For the complement, the label noise is effectively non-existent. 

\subsection{Risk Minimisation and Reliability}

The aim is to train a model $\model: \xspace \rightarrow \Delta^{K-1}$, belonging to some model class $\modelspace{}$ and predicting a conditional distribution over $Y$ for each $x \in \xspace$. Here, $\Delta^{K-1}$ denotes the $(K-1)$-simplex and so, $\model_{k}(x) \in [0, 1] \sforall k$ and $\|\model (x) \|_1 = \sum_{k \in \yspace} \model_{k}(x) = 1$. 

We will consider risk minimisation, where the risk with respect to the data distribution $\prob(X, Y)$ is defined according to 
%\vspace{-1ex}
\ifarxiv
\begin{align}
    \risk (\model) = \expectation_{X,Y}[\loss (\model(X), Y)] =
    \expectation_{X}[\expectation_{Y\mid X} [\loss (\model(X), Y)]]
    \label{eq:risk}
\end{align}
\else
\begin{align}
    \risk (\model) = %\expectation_{X,Y}[\loss (\model(X), Y)] =
    \expectation_{X}[\expectation_{Y\mid X} [\loss (\model(X), Y)]]
    \label{eq:risk}
\end{align}
\fi
with $\loss (\model(X), Y)$ a predefined loss function. In practice, the true data distribution is unknown and the risk is approximated by the empirical risk with respect to a finite data set.

Optimally, we would minimise $\risk$ directly. However, when noisy labels are observed, minimising the noisy risk $\noisyrisk$, with respect to the noisy data distribution $\prob(X, \Noisyy)$, is arguably the most straightforward, and occasionally the most sensible, option. In this case, the aim is nevertheless to find a model that performs well on the intended, clean target distribution. Hence, we will consider the properties of the model obtained by minimising $\noisyrisk$ and its relation to $Y$. 

Central to the arguments put forth in this paper, is the idea that reliability is a model property that can be equally important as accuracy. Formally, reliability (or calibration) is defined as follows.

\begin{definition}[Calibrated model, see e.g. \cite{Brocker2009, Vaicenavicius2019}]
     Let $\model: \xspace \rightarrow \Delta^{K-1}$ be a probabilistic predictive model and assume that $\prob(Y \mid \model (X))$ exists. The model $\model$ is calibrated if
    \begin{equation*}
            \prob(Y \mid \model(X)) = \model(X) %& \text{almost surely}.
    \end{equation*}
    almost surely.
    \label{def:calibrated_def}
\end{definition}
Hence, a model is reliable if its confidence, represented by the  predicted class probabilities, is equal to the true conditional probability over the outcome. 
\subsection{Proper and Robust Loss Functions}

In risk minimisation, the use of a strictly proper loss function gives asymptotic, theoretical guarantees that the true conditional distribution over labels will be recovered at the minimum. 

\begin{definition}[Proper loss function, see e.g. \cite{Gneiting2007, Reid2010}] % \parencite{Buja2005}
    A loss function $\loss$ is proper if
    \begin{align*}
        \truemodel(x) \in \underset{\model(x) \in \Delta^{K-1}}{\text{\normalfont{argmin}}} \; \mathbb{E}_{Y\mid X=x}[\loss(\model (x), Y)]
    \end{align*}
    for all conditional distributions $\prob(Y\mid X)$. If the minimum at $\truemodel(x)$ is unique, $\loss$ is strictly proper.  %For a \textit{strictly} proper loss function, the minimum at $\truemodel(x)$ is unique.
    \label{def:proper_loss}

\end{definition}
% Skriva givet x?

\Cref{def:proper_loss} concerns the point-wise risk of proper loss functions. However, by minimising the point-wise risk, we implicitly minimise the full risk. We will denote the set of proper and strictly proper loss functions by $\losspace{P}$ and $\losspace{SP}$, respectively. %Note that the commonly used cross-entropy loss is strictly proper.

Under the presence of label noise, the asymptotic risk minimiser of a strictly proper loss function will be $\noisymodel$ as opposed to $\truemodel$. Hence, the risk minimiser will differ depending on if $Y$ or $\Noisyy$ is considered. In contrast to this, robust loss functions have been identified and developed based on the idea of achieving robustness through noise-insensitive risk minimisers (e.g. \parencite{Ghosh2015, Ghosh2017, Wang2019, Zhang2018}). In this context, a loss $\loss$ is said to be robust to label noise if the asymptotic risk minimisers of the clean and noisy risks have the same probability of misclassification. A sufficient condition is that the risk minimiser, $\model^*$, of the clean risk, is also a minimiser of the noisy risk \parencite{Ghosh2017}. 
\begin{definition}[Robust loss function, see \cite{Ghosh2015, Ghosh2017}] A loss function $\loss$ is robust to label noise if for all asymptotic minimisers $\model^*$ of the clean risk, $\risk$, it holds that
    \begin{align*}
        \noisyrisk (\model^*) \leq \noisyrisk (\model), \quad \sforall \model \in \modelspace{}
    \end{align*}
    where $\noisyrisk$ is the risk under the noisy data distribution.
    
    \label{def:robustness_condition}
\end{definition}

We will refer to the set of robust loss functions by $\losspace{R}$. 

\Cref{def:robustness_condition} gives a theoretical condition for robustness, but does not, in itself, tell us how to construct a robust loss function.
Hence, there is a practical need of identifying individual loss functions, or classes thereof, that fulfills this condition. \cite{Ghosh2015, Ghosh2017}, find that \textit{symmetric} loss functions are robust under symmetric label noise and under simple non-uniform label noise with the extra condition that $\loss$ is positive and $\risk (\model^*) = 0$. % when w < (K-1)/K
\begin{definition}[Symmetric loss function, see e.g. \cite{Ghosh2015, Ghosh2017}]  A loss function $\loss$ is symmetric if
    \begin{equation*}
        \sum_{k=1}^{K} \loss (q, k) = C, \quad \sforall q \in \Delta^{K-1}  % \sforall x \in \mathcal{X},
    \end{equation*}
    for some constant $C$.
    \label{def:symmetric_loss}
\end{definition}
Referring to the set of symmetric loss functions by $\losspace{S}$, it thus holds that $\losspace{S} \subseteq \losspace{R}$. To our knowledge, $\losspace{S}$ is the only identified set of loss functions that are robust to simple non-uniform label noise and that does not require an estimate of $\omega(x)$. We recognise that there exists other robust loss functions but that is either only robust to input-independent label noise \parencite{Xu2019} or rely on knowledge of $\omega(x)$ (see e.g. \cite{Patrini2017, Natarajan2018}). We refer to the supplementary material for an analysis of the information-theoretic loss function proposed in \parencite{Xu2019}. For the second group of loss functions, we do not consider them in this paper, with the argument that the true noise rates are seldomly known in practice.

\section{Main results}
We evaluate the use of strictly proper and robust loss functions in the presence of label noise by analysing their respective risk minimisers. In general, we consider the following asymptotic properties of a loss function under the influence of label noise:
\begin{enumerate}[label=(\Alph*)]
    \item \label{itm:first_cond} It preserves decision boundaries, or accuracy, of a model trained with clean data.
    \item \label{itm:second_cond} It recovers the true conditional probability $\truemodel(x)$.% $\truemodel(X)$.
    \item \label{itm:third_cond} It results in a reliable, or calibrated, model.
\end{enumerate}
We find that both strictly proper and robust loss functions fulfill \cref{itm:first_cond}, but neither gives asymptotic, theoretical guarantees for \cref{itm:second_cond} or \cref{itm:third_cond}. Moreover, strictly proper as well as robust loss functions are susceptible to overfitting in practice. In parallel, while \cref{itm:first_cond} is commonly considered in the context of label noise robustness, \cref{itm:second_cond} and \cref{itm:third_cond} are consistently overlooked. This, we argue, in spite of their equal importance in many practical applications. The complete proofs and derivations for the results presented in this section can be found in the supplementary material.

\subsection{Distribution Over Noisy Labels}
Using a strictly proper loss function, we expect a flexible model to asymptotically approximate $\truemodel$ under the clean data distribution and $\noisymodel$ if label noise is present in the data. Hence, we expand on the implications of training a model with a data set containing simple non-uniform label noise by deriving results regarding the conditional distribution over noisy labels.
To evaluate  $\noisymodel$, we first note that it can be derived from $\truemodel$ by marginalisation
\begin{align}
    \noisymodel(x) = \sum_{k=1}^{K} \prob(\Noisyy \mid Y=k, X=x) \truemodel_k (x).
\end{align}
An alternative formulation is introduced in the following lemma.

\begin{restatable}{lemma}{glemma}
    The conditional probability vector $\noisymodel(x)$ can be written as function of $\truemodel (x)$ according to
    \begin{equation*}
        \noisymodel(x) = 
        \left(1-\frac{\omega(x)K}{K-1}\right)\truemodel(x) + \frac{\omega(x)}{K-1} \cdot \mathbf{1}_K
    \end{equation*}
    where $\mathbf{1}_K$ is the vector of ones with length $K$. Moreover, it holds for any two classes $i, j \in \yspace$ that $\noisymodel_i (x) > \noisymodel_j (x)$ if and only if $\truemodel_i (x) > \truemodel_j (x)$.
    \label{lem:func_of_g}
\end{restatable}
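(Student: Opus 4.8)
The plan is to establish the closed-form expression by direct substitution into the marginalisation identity stated immediately before the lemma, and then to read off the order-preserving property from that expression.

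First I would work componentwise. Fixing a class $k \in \yspace$, I start from $\noisymodel_k(x) = \sum_{m=1}^{K} \prob(\Noisyy = k \mid Y = m, X = x)\, \truemodel_m(x)$ and split the sum into the diagonal term $m = k$ and the off-diagonal terms $m \neq k$. Inserting the flip probabilities from \cref{def:simple_non_uniform_noise} gives $\noisymodel_k(x) = (1 - \omega(x))\truemodel_k(x) + \frac{\omega(x)}{K-1}\sum_{m \neq k}\truemodel_m(x)$. Because $\truemodel(x) \in \Delta^{K-1}$ we have $\sum_{m \neq k}\truemodel_m(x) = 1 - \truemodel_k(x)$, so everything collapses to $\noisymodel_k(x) = \left(1 - \omega(x) - \frac{\omega(x)}{K-1}\right)\truemodel_k(x) + \frac{\omega(x)}{K-1}$. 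Simplifying the coefficient via $\omega(x) + \frac{\omega(x)}{K-1} = \frac{\omega(x)K}{K-1}$ yields exactly the claimed scalar identity, and since it holds for every $k$ it assembles into the stated vector equation, with the $\mathbf{1}_K$ term collecting the constant offsets.

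For the second claim I would simply subtract the expressions for two classes $i$ and $j$. The additive constant $\frac{\omega(x)}{K-1}$ is identical in both coordinates and cancels, leaving $\noisymodel_i(x) - \noisymodel_j(x) = \left(1 - \frac{\omega(x)K}{K-1}\right)\bigl(\truemodel_i(x) - \truemodel_j(x)\bigr)$. The equivalence then reduces to showing the scalar prefactor is strictly positive, which is the only substantive point in the argument: the bound $0 \le \omega(x) < \frac{K-1}{K}$ assumed throughout (see \cref{def:simple_non_uniform_noise}) gives $\frac{\omega(x)K}{K-1} < 1$, hence $1 - \frac{\omega(x)K}{K-1} > 0$. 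A strictly positive multiplier preserves sign, so $\noisymodel_i(x) > \noisymodel_j(x)$ if and only if $\truemodel_i(x) > \truemodel_j(x)$.

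The computation is routine; the one place worth flagging is that the order-preserving conclusion genuinely depends on the upper bound on $\omega(x)$. Without $\omega(x) < \frac{K-1}{K}$ the prefactor could vanish, collapsing $\noisymodel$ to the uniform distribution, or turn negative, reversing the order, so I would invoke that constraint explicitly rather than treat the sign as automatic. This also matches the interpretation noted in the preliminaries, namely that the bound is precisely what keeps the dominant label intact under the noise.
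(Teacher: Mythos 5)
Your proof is correct and follows essentially the same route as the paper's: the identity comes from the identical split of the marginalisation sum into the $m=k$ and $m\neq k$ terms combined with the simplex constraint, and the order-preservation rests on the same observation that $0 \le \omega(x) < \frac{K-1}{K}$ makes the coefficient $1-\frac{\omega(x)K}{K-1}$ strictly positive. The only cosmetic difference is that you get the equivalence in one stroke by subtracting the two coordinates and invoking sign preservation under a strictly positive factor, whereas the paper proves the two implications separately, once by applying the increasing affine map directly and once by solving the affine relation for $\truemodel_i(x)$.
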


From \cref{lem:func_of_g}, the first result regarding the properties of $\noisymodel$ follows.

\begin{restatable}{proposition}{equaldecisions}
    Assume that the prediction is taken as the most probable class, then $\noisymodel$ has the same decision boundaries as $\truemodel$.
    \label{prop:decision_boundaries}
\end{restatable}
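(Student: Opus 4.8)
The plan is to reduce the statement entirely to the ordering-preservation property already established in \cref{lem:func_of_g}. Under the stated argmax decision rule, a decision boundary is precisely the locus of inputs $x$ at which the identity of the most probable class changes — equivalently, the boundary between the regions $\{x : \argmax{k}\truemodel_k(x) = i\}$ for distinct $i \in \yspace$. Since these regions are determined solely by the \emph{relative ordering} of the coordinates of the predicted probability vector, it suffices to show that $\truemodel(x)$ and $\noisymodel(x)$ induce the same argmax (as a subset of $\yspace$, to account for ties) at every $x$.

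First I would invoke the affine relation from \cref{lem:func_of_g}, which yields, for any $i, j \in \yspace$,
\begin{equation*}
    \noisymodel_i(x) - \noisymodel_j(x) = \left(1 - \frac{\omega(x)K}{K-1}\right)\left(\truemodel_i(x) - \truemodel_j(x)\right),
\end{equation*}
because the additive term $\frac{\omega(x)}{K-1}\mathbf{1}_K$ contributes equally to every coordinate and cancels in the difference. The coefficient $1 - \frac{\omega(x)K}{K-1}$ is strictly positive by the standing bound $0 \le \omega(x) < \frac{K-1}{K}$ from \cref{def:simple_non_uniform_noise}, so the sign of each pairwise difference — and hence the complete ordering of the coordinates — is preserved. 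This is exactly the ``if and only if'' clause of \cref{lem:func_of_g}, so one may either cite it directly or re-derive it from the positivity of the multiplier.

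From here the conclusion is immediate: for every $x$ we have $\argmax{k}\noisymodel_k(x) = \argmax{k}\truemodel_k(x)$, so the decision regions coincide and therefore so do their boundaries. The only point requiring care is the treatment of ties, i.e.\ inputs at which two or more coordinates attain the joint maximum; here I would observe that ties for $\truemodel$ correspond exactly to ties for $\noisymodel$ (again from the preserved ordering), so the boundary sets — where the maximiser fails to be unique — are identical for the two models. I do not expect a genuine obstacle: the entire content of the proposition is carried by the strict positivity of the scaling coefficient, which is itself guaranteed by the upper bound on the noise parameter $\omega(x)$.
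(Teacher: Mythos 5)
Your proof is correct and follows essentially the same route as the paper's: both arguments rest entirely on the affine relation of \cref{lem:func_of_g} together with the strict positivity of the coefficient $1 - \frac{\omega(x)K}{K-1}$, which follows from the standing bound $\omega(x) < \frac{K-1}{K}$. The only difference is presentational — by passing to the pairwise difference $\noisymodel_i(x) - \noisymodel_j(x)$ you obtain preservation of equalities (ties) and strict inequalities in a single stroke, whereas the paper proves the equality case in two separate directions (substitution and inversion of the affine map) and cites the lemma for the strict case.
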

\Cref{prop:decision_boundaries} establishes that the use of a strictly proper loss function will asymptotically preserve the accuracy of $\truemodel$ in the presence of simple non-uniform label noise. It is noteworthy, that if $\truemodel$ and $\noisymodel$ had not shared decision boundaries, no classification-calibrated loss function would be robust to the noise in terms of accuracy. Simply put, a loss function is classification-calibrated if the class predictions of its asymptotic risk minimiser(s) corresponds to taking the most probable class with respect to the true conditional probability of the observed target variable \parencite{Bartlett2006}. In binary classification, it is a minimal condition commonly imposed on surrogate losses of the 0/1-risk.

Next, we will show that preserving accuracy is not enough if uncertainty quantification is critical. As a first step towards this realisation, we consider the entropy of $\noisymodel$. The conditional entropy of a probabilistic vector $\model(x)$ given $X=x$ is defined as
\begin{align}
 \entropy{\model(x)} = - \sum_{k=1}^{K} \model_k(x) \log \model_k(x).
 \label{eq:entropy_def}
\end{align}
We have the following result. 

\begin{restatable}{proposition}{higherentropy}
    The average conditional entropy of $\noisymodel$ is higher than that of $\truemodel$,  that is $\expectation_X[\entropy{\noisymodel(X)}] > \expectation_X[\entropy{\truemodel(X)}]$.
    \label{prop:conditional_entropy}
\end{restatable}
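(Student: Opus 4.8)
The plan is to exploit the representation of $\noisymodel$ from \cref{lem:func_of_g} together with the strict concavity of the entropy functional. Writing $\lambda(x) = 1 - \frac{\omega(x)K}{K-1}$, the formula in \cref{lem:func_of_g} rearranges to
\begin{align*}
    \noisymodel(x) = \lambda(x)\,\truemodel(x) + (1-\lambda(x))\,\tfrac{1}{K}\mathbf{1}_K,
\end{align*}
so that $\noisymodel(x)$ is a convex combination of $\truemodel(x)$ and the uniform distribution $u := \frac{1}{K}\mathbf{1}_K$. The bounds $0 \le \omega(x) < \frac{K-1}{K}$ translate into $\lambda(x) \in (0,1]$, with $\lambda(x) = 1$ exactly when $\omega(x)=0$.

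Next I would record two elementary facts about $\entropy{\cdot}$ on $\Delta^{K-1}$: it is strictly concave, and it attains its unique maximum $\log K$ at $u$. Combining strict concavity with the maximality of $\entropy{u}$ gives the pointwise chain
\begin{align*}
    \entropy{\noisymodel(x)} \ge \lambda(x)\entropy{\truemodel(x)} + (1-\lambda(x))\entropy{u} \ge \entropy{\truemodel(x)},
\end{align*}
the second step using $\entropy{u} \ge \entropy{\truemodel(x)}$. Hence $\entropy{\noisymodel(X)} \ge \entropy{\truemodel(X)}$ holds almost surely.

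The main work is upgrading this to a strict inequality in expectation, and here I would invoke the standing assumption that $\prob\big(\{\omega(X)>0\}\cap\{\truemodel(X)\neq u\}\big) > 0$. On that event we have simultaneously $\lambda(x)\in(0,1)$ and $\truemodel(x)\neq u$, so $\truemodel(x)$ and $u$ are distinct and $\noisymodel(x)$ is a \emph{strict} interior convex combination; strict concavity then forces $\entropy{\noisymodel(x)} > \lambda(x)\entropy{\truemodel(x)} + (1-\lambda(x))\entropy{u} \ge \entropy{\truemodel(x)}$. Since the integrand $\entropy{\noisymodel(X)} - \entropy{\truemodel(X)}$ is nonnegative everywhere and strictly positive on a set of positive probability, taking expectations yields $\expectation_X[\entropy{\noisymodel(X)}] > \expectation_X[\entropy{\truemodel(X)}]$. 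The one point demanding care is confirming that the positive-probability assumption is precisely what rules out the degenerate case where the noise has no effect, since without it the two entropies could coincide almost surely.
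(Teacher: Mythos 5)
Your proof is correct and follows essentially the same route as the paper's: rewrite $\noisymodel(x)$ via \cref{lem:func_of_g} as a convex combination of $\truemodel(x)$ and the uniform vector, apply strict concavity of the entropy together with its unique maximum at the uniform distribution, and use the standing positive-probability assumption to turn the pointwise inequality into a strict inequality in expectation. The only cosmetic difference is that you get the non-strict pointwise bound everywhere from plain concavity, whereas the paper partitions $\xspace$ into the set where $\noisymodel(x)=\truemodel(x)$ (via its Lemma on when the two coincide) and its complement; both handle the degenerate case equivalently.
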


Since $\noisymodel(X) \neq \truemodel(X)$ with non-zero probability, \cref{prop:conditional_entropy} implies that training with a strictly proper loss function, under the influence of label noise, will not recover the true conditional probability over clean labels. This does not necessarily entail that $\noisymodel$ is uncalibrated. Nevertheless, the next result states that it is.

\begin{restatable}{proposition}{noisycalibration}
    The vector of conditional probabilities $\noisymodel (X)$ is not calibrated with respect to the distribution
    $\prob(Y \mid X)$ over clean labels.%, unless $\noisymodel(X) = \truemodel(X)$ almost surely.
    \label{prop:noisyy_calibration}
\end{restatable}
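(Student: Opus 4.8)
The plan is to argue by contradiction: assume $\noisymodel$ is calibrated with respect to $\prob(Y\mid X)$ in the sense of \cref{def:calibrated_def}, and derive a violation of the standing assumption that $\noisymodel(X)\neq\truemodel(X)$ with positive probability. First I would translate the calibration condition into a statement purely about conditional expectations. Since $\noisymodel(X)$ is a (measurable) function of $X$, the tower property gives $\prob(Y=k\mid\noisymodel(X))=\expectation[\,\truemodel_k(X)\mid\noisymodel(X)\,]$, so calibration is equivalent to $\expectation[\,\truemodel_k(X)\mid\noisymodel(X)\,]=\noisymodel_k(X)$ almost surely for every class $k$.

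The next step avoids computing this conditional expectation explicitly. For input-dependent $\omega$ the map $\truemodel(x)\mapsto\noisymodel(x)$ need not be injective, so $\sigma(\noisymodel(X))$ may be strictly coarser than $\sigma(\truemodel(X))$ and one cannot simply invert the noise transformation. Instead I would multiply the calibration identity by the $\sigma(\noisymodel(X))$-measurable quantity $\noisymodel_k(X)$ and take expectations, obtaining the second-moment identity $\expectation[\truemodel_k(X)\noisymodel_k(X)]=\expectation[\noisymodel_k(X)^2]$ for each $k$, i.e. $\expectation[\,\noisymodel_k(X)(\truemodel_k(X)-\noisymodel_k(X))\,]=0$.

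Then I would substitute the affine relation from \cref{lem:func_of_g}, writing $\noisymodel_k=\alpha\truemodel_k+\beta$ with $\alpha=1-\tfrac{\omega K}{K-1}$ and $\beta=\tfrac{\omega}{K-1}$ (both functions of $X$, satisfying $\alpha+K\beta=1$). A short computation gives $\truemodel_k-\noisymodel_k=\beta(K\truemodel_k-1)$, and summing $\noisymodel_k(\truemodel_k-\noisymodel_k)$ over $k$ collapses, using $\sum_k\truemodel_k=1$, to $\alpha\beta\big(K\sum_k\truemodel_k^2-1\big)$. Hence the calibration assumption forces $\expectation\big[\alpha\beta(K\sum_k\truemodel_k(X)^2-1)\big]=0$.

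Finally I would show this expectation is strictly positive, yielding the contradiction. Here $\alpha>0$ by the upper bound $\omega(x)<\tfrac{K-1}{K}$, $\beta\ge0$, and $K\sum_k\truemodel_k^2-1\ge0$ by Cauchy--Schwarz with equality iff $\truemodel=\tfrac1K\mathbf{1}_K$; so the integrand is nonnegative and strictly positive exactly on $\{\omega(X)>0\}\cap\{\truemodel(X)\neq\tfrac1K\mathbf{1}_K\}$, which has positive probability by the standing assumption. I expect the main obstacle to be the reduction in the second paragraph: passing from the abstract notion of calibration to a tractable algebraic identity without access to an inverse noise map. The moment-projection trick---pairing the calibration equation with the measurable $\noisymodel_k$---is what makes the genuinely input-dependent case go through, with symmetric noise appearing only as the easy special case where one could instead argue directly that $\sigma(\noisymodel(X))=\sigma(\truemodel(X))$ and conclude $\prob(Y\mid\noisymodel(X))=\truemodel(X)\neq\noisymodel(X)$.
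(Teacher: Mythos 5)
Your proof is correct, but it takes a genuinely different route from the paper's. The paper argues pointwise through the predicted class: it sets $M=\inf\,\text{argmax}_k\,\noisymodel_k(X)$, uses \cref{prop:decision_boundaries} to ensure $M$ is measurable with respect to both $\sigma(\truemodel(X))$ and $\sigma(\noisymodel(X))$, and invokes an auxiliary lemma (\cref{lem:argmax_lemma}) stating $\truemodel_M(X)\geq\noisymodel_M(X)$ with equality iff $\truemodel(X)=\noisymodel(X)$; the tower property then gives $\prob(Y=M\mid\noisymodel(X))=\expectation[\truemodel_M(X)\mid\noisymodel(X)]>\noisymodel_M(X)$ on a set of positive measure. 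You instead run an integrated $L^2$-orthogonality argument: calibration means $\expectation[\truemodel_k(X)-\noisymodel_k(X)\mid\noisymodel(X)]=0$, so pairing with the $\sigma(\noisymodel(X))$-measurable $\noisymodel_k(X)$, summing over $k$, and using the affine form from \cref{lem:func_of_g} collapses everything to $\expectation\bigl[\alpha\beta\bigl(K\sum_k\truemodel_k(X)^2-1\bigr)\bigr]=0$, which contradicts the standing assumption via Cauchy--Schwarz (your algebra checks out: $\truemodel_k-\noisymodel_k=\beta(K\truemodel_k-1)$ and the cross terms cancel because $\alpha+K\beta=1$). Note that the sum over $k$ is essential here, since the per-class quantity $\noisymodel_k(\truemodel_k-\noisymodel_k)$ can be negative when $\truemodel_k<\tfrac1K$; you do sum, so this is fine. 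What each approach buys: the paper's proof yields a sharper qualitative conclusion---the noisy minimiser is specifically \emph{under-confident} in the predicted class, which dovetails with the entropy result in \cref{prop:conditional_entropy}---at the cost of the extra argmax lemma and reliance on \cref{prop:decision_boundaries}; your proof is more self-contained (only \cref{lem:func_of_g} plus Cauchy--Schwarz), makes completely transparent where the assumption $\prob\left(\{\omega(X)>0\}\cap\{\truemodel(X)\neq\tfrac1K\cdot\mathbf{1}_K\}\right)>0$ enters, and, as you note, sidesteps the non-invertibility of the input-dependent noise map, which is the main obstruction to naively conditioning on $\noisymodel(X)$ and inverting.
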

We conclude that \cref{prop:decision_boundaries} is in agreement with the observed robustness to label noise in accuracy when training with regularisation \parencite{Li2020, Hendrycks2019}. If the model does not overfit to the data, such that it approximates $\noisymodel$, it will have similar accuracy on noise-free data as a model approximating $\truemodel$. However, \cref{prop:conditional_entropy,prop:noisyy_calibration} imply that this does not generalise to uncertainty quantification. Training with noise in the data using a strictly proper loss, will not asymptotically recover $\truemodel$ nor result in a reliable model. Hence, evaluating robustness solely in terms of accuracy can give a perceived robustness against label noise in spite of the final model not being reliable.

\subsection{Evaluation of Robust Loss Functions}
We derive results concerning the set of robust loss functions in \cref{def:robustness_condition}. Robustness in this regard has previously been determined to be sufficient for preserving accuracy \parencite{Ghosh2015, Ghosh2017}. Hence, our analysis concentrates on the reliability of the asymptotic risk minimisers of robust loss functions, shared between the clean and noisy risks. 
We first establish that robust loss functions are not strictly proper. 
\begin{restatable}{proposition}{robustnotsp}
    Robust loss functions (\cref{def:robustness_condition}) are not strictly proper.
    \label{prop:robustness_strictly_proper}
\end{restatable}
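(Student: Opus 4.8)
The plan is to proceed by contradiction, exploiting the tension between the \emph{uniqueness} built into strict properness and the requirement that the clean minimiser also minimise the noisy risk. So suppose, for contradiction, that some loss $\loss$ is simultaneously strictly proper (\cref{def:proper_loss}) and robust (\cref{def:robustness_condition}). Because $\loss$ is strictly proper, for every $x$ the point-wise risk $\expectation_{Y \mid X = x}[\loss(\model(x), Y)]$ is minimised \emph{uniquely} at $\model(x) = \truemodel(x)$; since minimising the point-wise risk everywhere minimises the full clean risk $\risk$, the map $\truemodel$ is the essentially unique asymptotic minimiser of $\risk$. Crucially, \cref{def:proper_loss} demands this uniqueness for \emph{all} conditional distributions $\prob(Y \mid X)$, so applying it instead to the noisy conditional $\prob(\Noisyy \mid X)$ shows that the point-wise noisy risk is minimised uniquely at $\noisymodel(x)$, whence $\noisymodel$ is the essentially unique minimiser of $\noisyrisk$.

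Next I would invoke robustness. Taking $\model^* = \truemodel$, \cref{def:robustness_condition} states that $\noisyrisk(\truemodel) \leq \noisyrisk(\model)$ for all $\model \in \modelspace{}$, i.e. the clean minimiser $\truemodel$ is itself a minimiser of the noisy risk. But we just argued that $\noisymodel$ is the unique noisy-risk minimiser; unwinding this back to the point-wise level, $\truemodel(x)$ must minimise the point-wise noisy risk for almost every $x$, and uniqueness then forces $\truemodel(x) = \noisymodel(x)$ almost surely. This directly contradicts the standing assumption that $\noisymodel(X) \neq \truemodel(X)$ with probability larger than $0$. Hence no loss can be both strictly proper and robust, which is the claim.

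The main obstacle, such as it is, lies not in the logical skeleton but in handling the passage between point-wise and full risk carefully. Strict properness is stated point-wise (\cref{def:proper_loss}), whereas robustness (\cref{def:robustness_condition}) is phrased in terms of the full risks $\risk$ and $\noisyrisk$; I must make precise that a minimiser of the full noisy risk agrees with the point-wise noisy minimiser $\noisymodel$ off a set of measure zero, so that the uniqueness survives the translation and the set on which $\truemodel \neq \noisymodel$ cannot be quietly absorbed into a null set. Once this measure-theoretic bookkeeping is in place, the contradiction with $\prob(\noisymodel(X) \neq \truemodel(X)) > 0$ is immediate, and notably the argument uses nothing about the specific simple non-uniform noise model beyond that standing positivity assumption.
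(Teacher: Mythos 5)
Your proposal is correct and follows essentially the same route as the paper's proof: strict properness (applied to both the clean and noisy conditionals) forces $\truemodel$ and $\noisymodel$ to be the unique minimisers of $\risk$ and $\noisyrisk$ respectively, robustness forces the clean minimiser $\truemodel$ to also minimise $\noisyrisk$, and uniqueness then yields $\truemodel = \noisymodel$, contradicting the standing assumption that $\prob(\noisymodel(X) \neq \truemodel(X)) > 0$. Your extra care about the point-wise-versus-full-risk translation and null sets is a sound refinement of a step the paper treats implicitly, not a different argument.
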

Intuitively, if a robust loss function had been strictly proper, it should have had a unique minimum at $\truemodel$ under the clean data distribution and at $\noisymodel$ under the noisy data distribution and hence, there would be no overlap in risk minimisers unless $\noisymodel = \truemodel$.  The opposite of \cref{prop:robustness_strictly_proper} must also be true; strictly proper loss functions are not robust according to \cref{def:robustness_condition} (a proof for $K=2$ can be found in \parencite{Reid2010}). Thus, $\losspace{SP} \cap \losspace{R} = \emptyset$.

Next, we consider the reliability of the asymptotic risk minimisers of robust loss functions. First, we again point out that, to our knowledge, the only identified class of loss functions that fulfills \cref{def:robustness_condition} under simple non-uniform label noise and at the same time does not require an estimate of the noise parameter $\omega(x)$ is that of symmetric loss functions \parencite{Ghosh2015, Ghosh2017}. Hence, the following results focus on this class.

We have shown that, in the context of label noise, using a strictly proper loss function is not optimal if reliability is of importance. Hence, it is of higher relevance to investigate if a robust loss function can recover $\truemodel$, than if it is strictly proper. We argue, however, that this is not an inherent property of robust loss functions in general.
The conditional (or point-wise) risk minima $\model ^* (x) = [\model^*_1 (x), 1-\model^*_1 (x)]^\top$ for a symmetric loss function, in a binary classification setting, are found at
\begin{align}
    \quad \model^*_1 (x) = \gamma \mathbb{I}_{\prob(Y=1\mid X=x) \geq \frac{1}{2}} + \gamma' \mathbb{I}_{\prob(Y=1\mid X=x) < \frac{1}{2}}
    \label{eq:symmetric_loss_minima}
\end{align}
with $\gamma \in \text{argmin}_{q \in [0,1]}  \loss(q, 1)$ and $\gamma' \in \text{argmax}_{q \in [0,1]}  \loss(q, 1)$. From here on, we will assume that $\gamma \in [\frac{1}{2}, 1]$ and $\gamma' \in [0, \frac{1}{2})$, since in all other cases for which $\model^*_1 (x) \in [0, 1]$, predictions will be consistently incorrect for at least one class.
In agreement with \parencite{Charoenphakdee2019}, we can not mathematically recover the true conditional probability over $Y$ from the minima in \cref{eq:symmetric_loss_minima}. As a result, $\truemodel$ can not be a unique minimiser of the clean (or noisy) risk of a symmetric loss function in general. Consequently, \cref{def:robustness_condition} is not sufficient for recovering $\truemodel$.  

%\subsubsection{Calibration-based strictly proper loss functions}
\label{sec:robust_loss_calibration_SP}
While an arbitrary robust loss function will not asymptotically recover $\truemodel$, this does not exclude the possibility of obtaining a calibrated model. Unfortunately, we demonstrate that the two properties of fulfilling the robustness condition in \cref{def:robustness_condition} and having only calibrated risk minimisers do not coincide. To this end, we will introduce a new set of loss functions referred to as \textit{calibration-based strictly proper}. We denote this set of loss functions by $\losspace{CSP}$. 
\begin{definition}(Calibration-based strictly proper loss function)
    Let $\modelspace{C}$ be the set of calibrated models in $\modelspace{}$. 
    The loss function $\loss$, with asymptotic risk minimisers $\model^* \in \modelspace{}$, is calibration-based strictly proper if %($\loss \in \losspace{CSP}$) 
    \begin{align*}
        \model^* \in \modelspace{C}, \quad \sforall \model^* 
    \in \modelspace{},
    \end{align*} 
    for all $\prob(Y \mid X)$ and for all input distributions $\mu_X$.
    \label{def:calibration_based_sp}
\end{definition}
In parallel to the definition of a strictly proper loss function, the definition of a calibration-based strictly proper loss puts a restriction on the asymptotic minimiser(s) of the corresponding risk. However, instead of requiring that the (unique) risk minimiser is equal to $\truemodel$, it requires that all asymptotic risk minimisers are calibrated. Hence, employing $\loss \in \losspace{CSP}$ will, asymptotically, result in a calibrated model. Since $\truemodel$ is calibrated by definition, all strictly proper loss functions are calibration-based strictly proper. However, this is not true for symmetric loss functions.
\begin{restatable}{proposition}{symmetriccalibration}
    Symmetric loss functions (\cref{def:symmetric_loss}) are not calibration-based strictly proper. 
    \label{prop:robustness_calibrated}
\end{restatable}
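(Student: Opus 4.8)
The plan is to work in the binary setting ($K=2$), matching the point-wise characterisation already derived in \cref{eq:symmetric_loss_minima}; since \cref{def:calibration_based_sp} must hold for every conditional distribution $\prob(Y\mid X)$ and every input distribution $\mu_X$, producing a single binary distribution on which some asymptotic risk minimiser is uncalibrated is enough to refute the property. The leverage is \cref{eq:symmetric_loss_minima}: for any symmetric loss (\cref{def:symmetric_loss}) the asymptotic point-wise minimiser $\model^*$ is a two-valued step function of the posterior, with $\model^*_1(x)=\gamma$ on $A := \{x : \prob(Y=1\mid X=x)\geq \tfrac12\}$ and $\model^*_1(x)=\gamma'$ on $A^c$, where the constants $\gamma\in[\tfrac12,1]$ and $\gamma'\in[0,\tfrac12)$ are the minimiser and maximiser of $q\mapsto\loss(q,1)$ and therefore depend \emph{only} on $\loss$, with $\gamma\neq\gamma'$.

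First I would translate \cref{def:calibrated_def} into a condition on the level sets of $\model^*_1(X)$. Because $\model^*_1(X)$ attains only the two distinct values $\gamma$ and $\gamma'$ and $\{\model^*_1(X)=\gamma\}=\{X\in A\}$, the calibration requirement $\prob(Y\mid\model^*(X))=\model^*(X)$ collapses to the two scalar equations
\[
\prob(Y=1\mid X\in A)=\gamma, \qquad \prob(Y=1\mid X\in A^c)=\gamma'.
\]
Writing $\prob(Y=1\mid X\in A)=\expectation_X[\,\prob(Y=1\mid X)\mid X\in A\,]$, calibration forces a \emph{distribution-dependent} average posterior over each level set to coincide with a \emph{fixed}, loss-dependent constant.

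Then I would exhibit one distribution that breaks this. The cleanest choice is a degenerate $\mu_X$ concentrated at a single point $x_0$ with $\prob(Y=1\mid X=x_0)=\eta_0\in[\tfrac12,1]$ and $\eta_0\neq\gamma$, which is always possible because $\gamma$ is a single fixed number while $\eta_0$ may be chosen freely. Then $A$ is the whole support, $\model^*_1\equiv\gamma$, and calibration would demand $\eta_0=\prob(Y=1)=\gamma$, a contradiction; hence this valid asymptotic risk minimiser is not calibrated and \cref{def:calibration_based_sp} fails. If a non-degenerate input is preferred, two atoms with posteriors $0.9$ and $0.6$ (both in $A$) and suitably chosen masses give a conditional average in $(0.6,0.9)$ that can likewise be made to differ from $\gamma$.

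The main thing to handle carefully is non-uniqueness rather than any computation. The minimiser and maximiser defining $\gamma,\gamma'$ may be set-valued, and ties occur on $\{\prob(Y=1\mid X)=\tfrac12\}$, so many asymptotic minimisers can coexist; I would stress that \cref{def:calibration_based_sp} demands that \emph{every} minimiser be calibrated, so it suffices to fix one admissible selection $\model^*_1\equiv\gamma$ and show it fails — non-uniqueness helps rather than hinders. The single genuinely essential point, which I would state explicitly, is that $\gamma$ and $\gamma'$ are fixed by $\loss$ alone and are insensitive to $\prob(Y\mid X)$ and $\mu_X$; this independence is exactly what lets the distribution-dependent conditional average $\prob(Y=1\mid X\in A)$ be pushed away from the rigid target $\gamma$, yielding the contradiction.
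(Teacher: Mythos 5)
Your proposal is correct, and it rests on exactly the same pillar as the paper's proof: the characterisation in \cref{eq:symmetric_loss_minima} of the point-wise minimisers as a two-valued step function with levels $\gamma,\gamma'$ that are fixed by $\loss$ alone, together with the reduction of \cref{def:calibrated_def} to the two scalar conditions $\prob(Y=1\mid X\in A)=\gamma$ and $\prob(Y=1\mid X\in A^c)=\gamma'$. Where you diverge is in how the failing distribution is produced. The paper argues by perturbation: it supposes a pair $(\prob(Y\mid X),\mu_X)$ for which the minimiser happens to be calibrated, then replaces the posterior by $\prob'(Y=1\mid X=x)=(1-2\alpha)\prob(Y=1\mid X=x)+\alpha$ with $0<\alpha<\tfrac12$, which leaves the sets $\xspace_1,\xspace_2$ (and hence the minimiser) unchanged but shifts both conditional averages strictly towards $\tfrac12$, so calibration breaks. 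You instead give a direct, explicit counterexample: a point mass at $x_0$ with posterior $\eta_0\in[\tfrac12,1]\setminus\{\gamma\}$, for which the constant minimiser $\model^*_1\equiv\gamma$ cannot satisfy $\prob(Y=1)=\gamma$. Your route is more elementary — no hypothetical calibrated distribution, no perturbation bookkeeping — and it makes the essential mechanism (a rigid, loss-determined level versus a freely chosen posterior) maximally transparent; its only cosmetic cost is degeneracy of $\mu_X$, which the definition does not exclude and which your two-atom fallback removes anyway. The paper's perturbation argument buys something slightly stronger in spirit: it shows that calibration of a symmetric-loss minimiser can only ever hold coincidentally, since an arbitrarily small reshaping of the posterior near any calibrated configuration destroys it. Both proofs handle non-uniqueness of $\gamma,\gamma'$ the same way, by fixing one admissible selection and noting that \cref{def:calibration_based_sp} quantifies over all minimisers, so exhibiting one uncalibrated minimiser suffices.
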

From \cref{prop:robustness_calibrated}, we have $\losspace{S} \cap \losspace{CSP} = \emptyset$. As an example, mean absolute error and sigmoid loss, both symmetric, have a unique minimum for $\model_1(x) \in [0, 1]$ in \cref{eq:symmetric_loss_minima} with $(\gamma, \gamma') = (1, 0) \sforall x \in \xspace$. For a non-separable classification problem, this model is never calibrated. Nevertheless, symmetric loss functions are robust to symmetric label noise also in this case. 

To conclude, \cref{def:robustness_condition} constitutes a sufficient condition for robustness in accuracy. However, there are no theoretical basis for why, even asymptotically, the use of a robust loss function will result in a reliable model. On the contrary, the class of symmetric, robust loss functions will \textit{not} (asymptotically) recover $\truemodel$ and are \textit{not} calibration-based strictly proper.

\subsection{Robustness and Overfitting}
\label{sec:mae_test}
While \cref{def:robustness_condition} relies on asymptotic theory, a finite training set has to suffice in practice. Naturally, empirical evaluation has been used to demonstrate the noise-insensitivity and motivate the use of robust loss functions also under such circumstances \parencite{Ghosh2017}. However, we show, with a similar experiment, that these loss functions are not robust to overfitting and argue that any perceived robustness is not explained by the asymptotic theory.\footnote{
Code provided at:
\url{https://github.com/AOlmin/robustness_and_reliability_in_weak_supervision}}

For the experiment, we consider neural networks with one hidden layer of 500 hidden units and with LeakyReLU as activation function. The data used is flattened MNIST images \parencite{Lecun1998} to which we artificially add symmetric label noise with parameter $\omega \in [0.0, 0.3, 0.5]$. We train the models using ADAM optimization \parencite{Kingma2015} with a constant learning rate of 0.005 and a batch size of 100. Similar to \cite{Ghosh2017}, the robust (and symmetric) loss function that we consider is mean absolute error (MAE). First, we train models with MAE and random initialisation. We evaluate the accuracy on the corresponding training data set and a noise-free test data set during the course of training, as shown in \cref{fig:acc_MAE}. For comparison, we do a similar evaluation of models trained with categorical cross-entropy (CCE) loss, see \cref{fig:acc_CCE}.   

Under the influence of label noise, the models trained with CCE clearly overfit to the training data. The accuracy on the training data set is close to 1 for all considered values of $\omega$ within 500 epochs of training, while the test accuracy decreases as training progresses. In contrast, the models trained with MAE seems to stabilise at a point where the accuracy over the clean test data set remains high, even in the presence of label noise. Observing only these trends, it is intriguing to assume that MAE is robust to overfitting and that this can be explained by the loss function's noise-insensitive risk minimiser. However, overfitting is not considered in asymptotic theory, on which the noise-insensitivity of robust loss functions rely. In addition, we have previously demonstrated that the asymptotic risk minimiser of CCE loss, a strictly proper loss, is also robust in accuracy to symmetric label noise. In spite of this, the models trained with CCE loss overfit to the label noise.

To support our arguments, we train models again with MAE but replace the random initialisation. For each noise level, the model weights are initialised with those obtained when training a model with CCE loss for 500 epochs. Evaluating the new models in terms of accuracy, the trends observed are more similar to those of the models trained only with CCE loss, see \cref{fig:acc_MAE_pretrained}. Evidently, the models are capable of overfitting to training data, even when a robust loss function is used. Furthermore, in all cases, the models achieve a smaller training loss compared to those trained with random initialisation, as shown in  \cref{fig:mae_test_loss}. Hence, the models trained with random initialisation must be stuck in local minima. At the same time, the asymptotic theory of robust loss functions concerns global, not local, minima.

To conclude, we have demonstrated that robustness in the context of \cref{def:robustness_condition} should not be confused with robustness to overfitting. Indeed, the asymptotic theory behind robust loss function is not concerned with this issue. Thus, models can overfit to label noise, even when a robust loss function is employed. 

\begin{figure*}
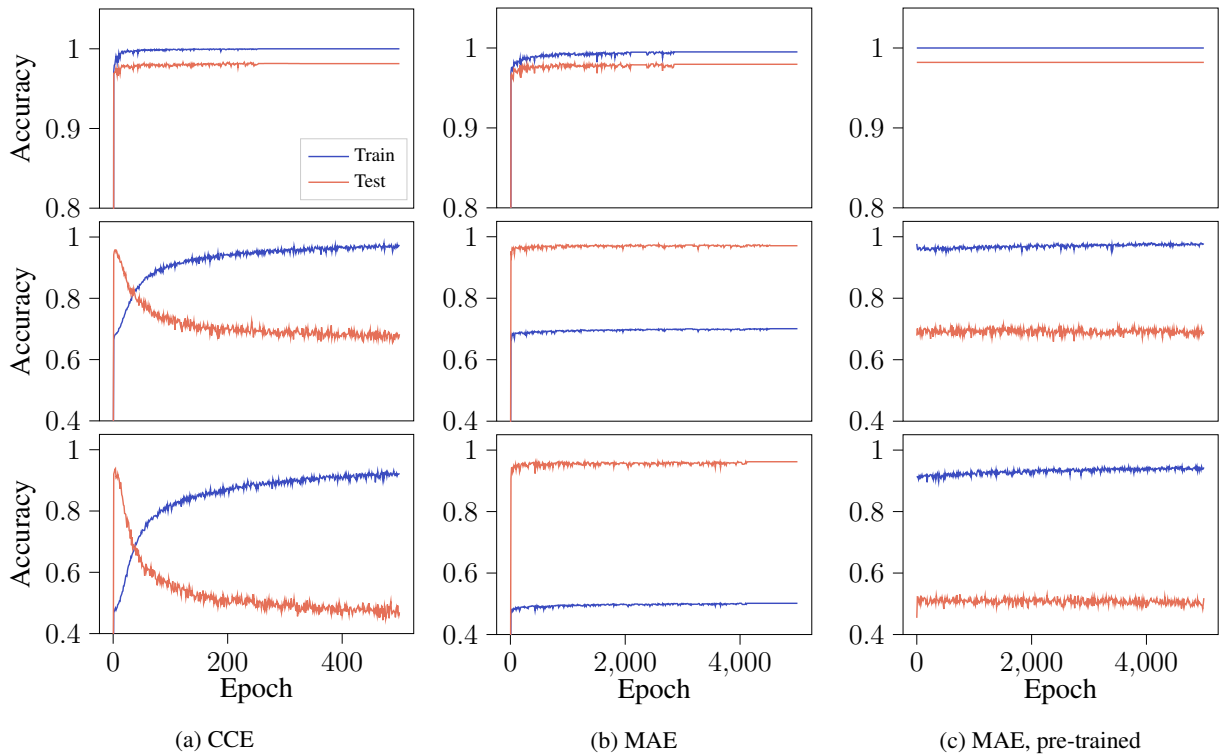

    \centering
    \begin{subfigure}{0.33\textwidth}
        % This file was created by tikzplotlib v0.9.8.
\tikzset{font={\fontsize{12pt}{12}\selectfont}}

\ifarxiv
\newcommand\figscale{0.9}
\newcommand\vertsep{0.2}
\newcommand\legendfont{8}
\else
\newcommand\figscale{0.8}
\newcommand\vertsep{0.2}
\newcommand\legendfont{10}
\fi

\begin{tikzpicture}[scale = \figscale]

\definecolor{color0}{rgb}{0.2298057,0.298717966,0.753683153}
\definecolor{color1}{rgb}{0.89904617,0.439559467,0.343229596}
\definecolor{color2}{rgb}{0.2298057,0.298717966,0.753683153}
\definecolor{color3}{rgb}{0.89904617,0.439559467,0.343229596}
\definecolor{color4}{rgb}{0.2298057,0.298717966,0.753683153}
\definecolor{color5}{rgb}{0.89904617,0.439559467,0.343229596}
%\definecolor{color2}{rgb}{0.3010, 0.7450, 0.9330}
%\definecolor{color3}{rgb}{0.9290, 0.6940, 0.1250}

\pgfplotsset{%
            width=1.1\linewidth,
            height=0.8\linewidth
            }

\begin{groupplot}[group style={group size=1 by 3, group name=omega 0 acc plots, horizontal sep =1.5cm,vertical sep =\vertsep cm}]
\nextgroupplot[
legend cell align={left},
legend style={
  fill opacity=0.8,
  draw opacity=1,
  text opacity=1,
  at={(0.98,0.04)},
  anchor=south east,
  draw=white!80!black,
  font={\fontsize{\legendfont pt}{12}\selectfont}
},
tick align=outside,
tick pos=left,
%x grid style={white!69.0196078431373!black},
%xlabel={Epoch/Step},
xmin=-23.95, xmax=524.95,
%xtick style={color=black},
xmajorticks=false,
y grid style={white!69.0196078431373!black},
ylabel={Accuracy},
ylabel style = {yshift=-0.1cm},
ymin=0.8, ymax=1.05,
ytick style={color=black}
]

\input{experiments/figures/cce_0}

\nextgroupplot[
legend cell align={left},
legend style={
  fill opacity=0.8,
  draw opacity=1,
  text opacity=1,
  at={(0.97,0.03)},
  anchor=south east,
  draw=white!80!black
},
tick align=outside,
tick pos=left,
x grid style={white!69.0196078431373!black},
xmajorticks=false,
xmin=-23.95, xmax=524.95,
%xtick style={color=black},
y grid style={white!69.0196078431373!black},
ylabel={Accuracy},
ylabel style = {yshift=-0.1cm},
ymin=0.4, ymax=1.05,
ytick style={color=black}
]

\input{experiments/figures/cce_03}

%\if
\nextgroupplot[
legend cell align={left},
legend style={
  fill opacity=0.8,
  draw opacity=1,
  text opacity=1,
  at={(1.62,0.8)},
  anchor=south east,
  draw=white!80!black,
},
tick align=outside,
tick pos=left,
x grid style={white!69.0196078431373!black},
xlabel={Epoch},
xmin=-23.95, xmax=524.95,
xtick style={color=black},
y grid style={white!69.0196078431373!black},
ylabel={Accuracy},
xlabel style = {yshift=0.0cm},
ylabel style = {yshift=-0.1cm},
ymin=0.4, ymax=1.05,
ytick style={color=black}
]
%\fi

%\input{experiments/figures/mae_ccei_0}
\input{experiments/figures/cce_05}

%\addlegendentry{Train}
%\addlegendentry{Test}

\end{groupplot}

%\node[text width=6cm,align=center,anchor=north] at ([yshift=-10mm]omega 0 acc plots c1r3.south) {c) $\omega$ = 0.0};%{\captionof{subfigure}{$\omega$ = 0.5\label{subplot:c}}};

\end{tikzpicture}
        %\caption{$\omega = 0.0$}
        \caption{CCE}
        \label{fig:acc_CCE}
    \end{subfigure}
    \begin{subfigure}{0.31\textwidth}
        % This file was created by tikzplotlib v0.9.8.
\tikzset{font={\fontsize{12pt}{12}\selectfont}}

\ifarxiv
\newcommand\figscale{0.9}
\newcommand\vertsep{0.2}
\else
\newcommand\figscale{0.8}
\newcommand\vertsep{0.2}
\fi

\begin{tikzpicture}[scale = \figscale]

\definecolor{color0}{rgb}{0.2298057,0.298717966,0.753683153}
\definecolor{color1}{rgb}{0.89904617,0.439559467,0.343229596}
\definecolor{color2}{rgb}{0.2298057,0.298717966,0.753683153}
\definecolor{color3}{rgb}{0.89904617,0.439559467,0.343229596}
\definecolor{color4}{rgb}{0.2298057,0.298717966,0.753683153}
\definecolor{color5}{rgb}{0.89904617,0.439559467,0.343229596}
%\definecolor{color2}{rgb}{0.3010, 0.7450, 0.9330}
%\definecolor{color3}{rgb}{0.9290, 0.6940, 0.1250}

\pgfplotsset{%
            width=1.1\linewidth * 0.32/0.3,
            height=0.8\linewidth * 0.32/0.3
}

\begin{groupplot}[group style={group size=1 by 3, group name=omega 03 acc plots, horizontal sep =1.5cm,vertical sep =\vertsep cm}]
\nextgroupplot[
legend cell align={left},
legend style={
  fill opacity=0.8,
  draw opacity=1,
  text opacity=1,
  at={(0.97,0.03)},
  anchor=south east,
  draw=white!80!black
},
tick align=outside,
tick pos=left,
%x grid style={white!69.0196078431373!black},
%xlabel={Epoch/Step},
xmin=-239.5, xmax=5249.5,
%xtick style={color=black},
xmajorticks=false,
%ymajorticks=false,
y grid style={white!69.0196078431373!black},
ylabel style = {yshift=-0.2cm},
ymin=0.8, ymax=1.05,
ytick style={color=black}
]

\input{experiments/figures/mae_ri_0}

\nextgroupplot[
legend cell align={left},
legend style={
  fill opacity=0.8,
  draw opacity=1,
  text opacity=1,
  at={(0.97,0.03)},
  anchor=south east,
  draw=white!80!black
},
tick align=outside,
tick pos=left,
x grid style={white!69.0196078431373!black},
xmajorticks=false,
%ymajorticks=false,
xmin=-239.5, xmax=5249.5,
%xtick style={color=black},
y grid style={white!69.0196078431373!black},
ymin=0.4, ymax=1.05,
ytick style={color=black}
]

\input{experiments/figures/mae_ri_03}

%\if
\nextgroupplot[
legend cell align={left},
legend style={
  fill opacity=0.8,
  draw opacity=1,
  text opacity=1,
  at={(1.6,0.7)},
  anchor=south east,
  draw=white!80!black
},
tick align=outside,
tick pos=left,
x grid style={white!69.0196078431373!black},
xlabel={Epoch},
xmin=-239.5, xmax=5249.5,
xtick style={color=black},
%ymajorticks=false,
y grid style={white!69.0196078431373!black},
xlabel style = {yshift=0.0cm},
ymin=0.4, ymax=1.05,
ytick style={color=black}
]

\input{experiments/figures/mae_ri_05}
%\fi

%\addlegendentry{Train}
%\addlegendentry{Test}

\end{groupplot}

%\node[text width=6cm,align=center,anchor=north] at ([yshift=-10mm]omega 0 acc plots c1r3.south) {c) $\omega$ = 0.0};%{\captionof{subfigure}{$\omega$ = 0.5\label{subplot:c}}};

\end{tikzpicture}
        \caption{MAE}
        \label{fig:acc_MAE}
    \end{subfigure}
    \begin{subfigure}{0.31\textwidth}
        % This file was created by tikzplotlib v0.9.8.
\tikzset{font={\fontsize{12pt}{12}\selectfont}}

\ifarxiv
\newcommand\figscale{0.9}
\newcommand\vertsep{0.2}
\else
\newcommand\figscale{0.8}
\newcommand\vertsep{0.2}
\fi

\begin{tikzpicture}[scale = \figscale]

\definecolor{color0}{rgb}{0.2298057,0.298717966,0.753683153}
\definecolor{color1}{rgb}{0.89904617,0.439559467,0.343229596}
\definecolor{color2}{rgb}{0.2298057,0.298717966,0.753683153}
\definecolor{color3}{rgb}{0.89904617,0.439559467,0.343229596}
\definecolor{color4}{rgb}{0.2298057,0.298717966,0.753683153}
\definecolor{color5}{rgb}{0.89904617,0.439559467,0.343229596}
%\definecolor{color2}{rgb}{0.3010, 0.7450, 0.9330}
%\definecolor{color3}{rgb}{0.9290, 0.6940, 0.1250}

\pgfplotsset{%
            width=1.1\linewidth * 0.32/0.3,
            height=0.8\linewidth * 0.32/0.3
            }

\begin{groupplot}[group style={group size=1 by 3, group name=omega 05 acc plots, horizontal sep =1.5cm,vertical sep =\vertsep cm}]
\nextgroupplot[
legend cell align={left},
legend style={
  fill opacity=0.8,
  draw opacity=1,
  text opacity=1,
  at={(0.97,0.03)},
  anchor=south east,
  draw=white!80!black
},
tick align=outside,
tick pos=left,
%x grid style={white!69.0196078431373!black},
%xlabel={Epoch/Step},
xmin=-239.5, xmax=5249.5,
%xtick style={color=black},
xmajorticks=false,
%ymajorticks=false,
y grid style={white!69.0196078431373!black},
ylabel style = {yshift=-0.2cm},
ymin=0.8, ymax=1.05,
ytick style={color=black}
]

\input{experiments/figures/mae_ccei_0}

\nextgroupplot[
legend cell align={left},
legend style={
  fill opacity=0.8,
  draw opacity=1,
  text opacity=1,
  at={(0.97,0.03)},
  anchor=south east,
  draw=white!80!black
},
tick align=outside,
tick pos=left,
x grid style={white!69.0196078431373!black},
xmajorticks=false,
%ymajorticks=false,
xmin=-239.5, xmax=5249.5,
%xtick style={color=black},
y grid style={white!69.0196078431373!black},
ymin=0.4, ymax=1.05,
ytick style={color=black}
]

\input{experiments/figures/mae_ccei_03}

%\if
\nextgroupplot[
legend cell align={left},
legend style={
  fill opacity=0.8,
  draw opacity=1,
  text opacity=1,
  at={(1.6,0.7)},
  anchor=south east,
  draw=white!80!black
},
tick align=outside,
tick pos=left,
x grid style={white!69.0196078431373!black},
xlabel={Epoch},
xmin=-239.5, xmax=5249.5,
xtick style={color=black},
%ymajorticks=false,
y grid style={white!69.0196078431373!black},
xlabel style = {yshift=0.0cm},
ymin=0.4, ymax=1.05,
ytick style={color=black}
]

\input{experiments/figures/mae_ccei_05}

%\addlegendentry{Train}
%\addlegendentry{Test}
%\fi
\end{groupplot}

%\node[text width=6cm,align=center,anchor=north] at ([yshift=-10mm]omega 0 acc plots c1r3.south) {c) $\omega$ = 0.0};%{\captionof{subfigure}{$\omega$ = 0.5\label{subplot:c}}};

\end{tikzpicture}
        \caption{MAE, pre-trained}
        \label{fig:acc_MAE_pretrained}
    \end{subfigure}
    \caption{Accuracy on (noisy) train and clean test MNIST data for models trained with symmetric label noise. Models are trained with categorical cross-entropy loss (CCE) or mean absolute error (MAE). Models trained with MAE are either initialised randomly (MAE) or pre-trained with CCE loss (MAE, pre-trained). Top: $\omega = 0.0$. Middle: $\omega=0.3$. Bottom: $\omega=0.5$.}
    \label{fig:mae_test_acc}
\end{figure*}

\begin{figure*}[t]
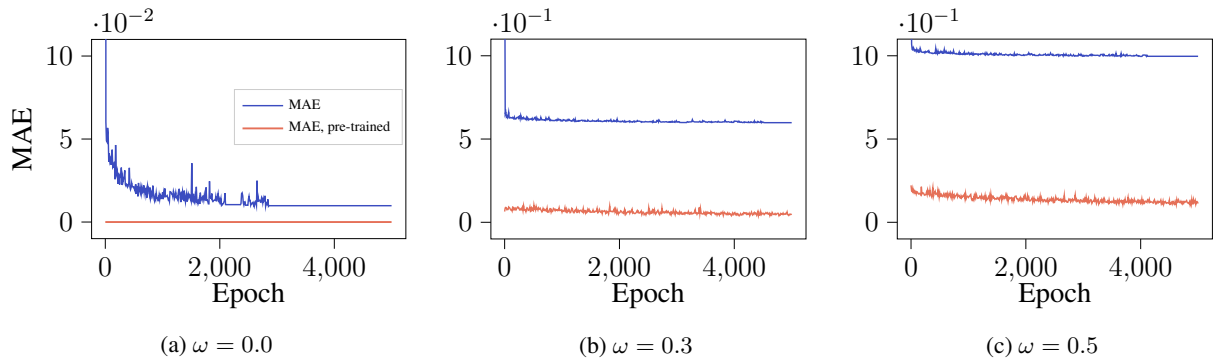

    \centering
    \begin{subfigure}{0.33\textwidth}
        \input{experiments/figures/mae_loss_0}
        \caption{$\omega = 0.0$}
    \end{subfigure}
       \begin{subfigure}{0.31\textwidth}
       \input{experiments/figures/mae_loss_03}
        \caption{$\omega = 0.3$}
    \end{subfigure}
        \begin{subfigure}{0.31\textwidth}
        \input{experiments/figures/mae_loss_05}
        \caption{$\omega = 0.5$}
   \end{subfigure}
    \caption{Train loss of models trained with MAE. The models pre-trained with CCE achieve a smaller loss than the models trained with random weight initialisation.}
    \label{fig:mae_test_loss}
\end{figure*}

\section{Discussion}
On one hand, the results presented in this paper can help explain why some training algorithms have a perceived, inherent robustness to label noise and are reassuring if accuracy is the main property of interest. On the other hand, they point towards a weakness of evaluating robustness solely based on metrices that are agnostic to predicted class probabilities. Specifically, it does not ensure that the final model is reliable. Although we have considered a simpler input-dependent noise model in this paper, the hope is that the knowledge gained could be used as a basis for gaining understanding of more complex noise. Moreover, part of the results presented are of the negative kind. Hence, if they do hold for the specific noise model considered, they also hold in the more general case.

Consider training a flexible discriminative model, such as a deep neural network, capable of approximating the true conditional probability over labels. When labels are noisy and using a strictly proper loss function, the convergent phase of the training dynamics will "aim" towards $\Tilde{\model}^{*} = \noisymodel$ instead of $\model^{*} = \truemodel$, as illustrated in \cref{fig:intuition_sketch_b}. We have shown that, in this case, $\Tilde{\model}^{*}$ share decision boundaries with $\model^{*}$, but it is not calibrated. From the view of \cref{fig:intuition_sketch_c}, using a robust loss function, the convergent phase of the training trajectory will instead aim towards the same point, regardless of whether labels are noisy or not, i.e. $\Tilde{\model}^{*}=\model^{*}$. We have characterised the properties of this risk minimiser and conclude that fulfillment of the robustness condition (\cref{def:robustness_condition}) is neither a proxy for having $\Tilde{\model}^{*}=\truemodel$ nor for $\Tilde{\model}^{*}$ being calibrated.

With these results in mind, we argue that there is \textit{no theoretical motivation} for why an arbitrary robust loss function would be better to employ than a strictly proper loss function. Under simple non-uniform noise, both are robust in terms of accuracy, but neither asymptotically guarantees that $\truemodel$ will be recovered or that the final model will be calibrated. 

In \cref{sec:robust_loss_calibration_SP}, we introduced the class of calibration-based strictly proper loss functions, for which all asymptotic risk minimisers are calibrated. Our argument is that there is a relevance in defining a loss function that, asymptotically, achieves robustness in accuracy as well as results in a calibrated model. For future work, it would be of interest to find a class of loss functions that fulfills both of these conditions and that is also not dependent on an estimate of the (usually unknown) noise distribution. 
\section{Conclusion}
\label{sec:conclusion}
For supervised training of discriminative models, we investigated the effect of label noise on model performance by analysing the properties of the conditional distribution over noisy labels. Furthermore, we critically reviewed the set of robust loss functions characterised by asymptotic risk minimisers that are insensitive to label noise. % that are unaffected by label noise.
Under an input-dependent noise model, we found that both strictly proper and robust loss functions offer asymptotic robustness in accuracy but neither offer asymptotic, theoretical guarantees for obtaining a calibrated, or reliable, model in the presence of label noise. In addition, robustness in this context should not be misunderstood as the model being robust to noise in practice. Even when a robust loss function is used, the model can still overfit to training data. We conclude that further investigation is needed for better understanding of the effects of label noise on model performance and in order to ensure reliability of models trained with label noise. Such research would be valuable in the strive for safe employment in society.

\section*{Acknowledgements}
We thank Jakob Lindqvist and Lennart Svensson (Chalmers University of Technology, Sweden) for constructive feedback on the paper.

This research is financially supported by the Swedish Research Council via the project
\emph{Handling Uncertainty in Machine Learning Systems} (contract number: 2020-04122),
the Swedish Foundation for Strategic Research via the project
\emph{Probabilistic Modeling and Inference for Machine Learning} (contract number: ICA16-0015),
the Wallenberg AI, Autonomous Systems and Software Program (WASP) funded by the Knut and Alice Wallenberg Foundation,
and
ELLIIT.

%--------------------------------------------
%\medskip
%\vfill
%\pagebreak
\printbibliography
%--------------------------------------------

\end{refsection}

\newpage

\onecolumn
\noindent\LARGE{\textbf{Supplementary material for \textit{Robustness and \\reliability when training with noisy labels}}}
\normalsize
\appendix

\begin{refsection}

\section{Asymptotic risk minimisers}
\label{sec:pop_minimisers}

We derive the conditional (or point-wise), asymptotic risk minimisers for categorical cross-entropy (CCE) loss, mean absolute error (MAE) and sigmoid loss in a binary classification setting. The first loss function is strictly proper and the last two are symmetric according to \cref{def:symmetric_loss}. We also derive the asymptotic risk minimiser for a general symmetric loss. A summary is shown in \cref{tab:pop_minimisers}. Note that for a binary classifier, we assume $\model(x) = [\model_1(x), 1-\model_1(x)]^\top$.

To derive the risk minimisers, we consider finding the minima of the point-wise risk  $\mathcal{J}(\model(x)) = 
\expectation_{Y\mid X= x}[\loss(\model(x), Y)]$. For convenience, we will sometimes refer to the one-hot version of $Y$ by $\onehotY$. In that case, $\onehotY_k$ refers to the $k^{th}$ element of the vector. 

\begin{table}[b]
    \centering
    {\renewcommand{\arraystretch}{1.5}
    \caption{Some loss functions and their asymptotic risk minimisers in a binary classification setting. We show the predicted probability for class 1,  $\model^*_1(x)$, but $\model^*_2 (x) = 1 - \model^* _1 (x)$. For short notation we use $\gamma \in \text{argmin}_{q \in [0, 1]} \loss (q, 1)$ and $\gamma' \in \text{argmax}_{q \in [0,1]} \loss (q, 1)$.
    }
    \label{tab:pop_minimisers}
    \begin{tabular}{ccc}
         Loss %& $\loss(\model(x), y)$ 
         & $\model ^{*}_1 (x)$ & $\sum_{k=1}^{K} \loss(\model(x), k)$ \\ \toprule
         
         CCE %& $- \log \model^{(y)} (x)$ 
         & $\prob(Y=1\mid X=x)$ &  $- \sum_{k=1}^{2} \log \model_k (x)$ \\
         
         %MSE % & $\sum_{k=1}^{2}(\onehoty_k  - \model_k(x))^2$ 
         %& $\prob(Y=1\mid X=x)$ & 
         %2 $\| \model (x) \|_2^2$ %$K - 2 + K \| \model (x) \|_2^2$
         %\\
         
         MAE %& $ \sum_{k=1}^{2}|\onehoty_k  - \model_k(x)|$ 
         & $\mathbb{I}_{\prob(Y=1\mid X=x) \geq \frac{1}{2}}$ & 2 %$2K - 2$ 
         \\
         
         Sigmoid % & $e_0^{(y)} \frac{1}{1+ e^{\model (x)}} + e_1^{(y)}\frac{e^{\model(x)}}{1+ e^{\model (x)}}$ 
         & $\mathbb{I}_{\prob(Y=1\mid X=x) \geq \frac{1}{2}}$ & 1 \\
         
         %0-1 %& $\mathbb{I}_{\argmax{k}{\model_k} (x) \neq y} $  
         %& $\gamma \mathbb{I}_{\prob(Y=1\mid X=x) \geq \frac{1}{2}} + \gamma' \mathbb{I}_{\prob(Y=1\mid X=x) < \frac{1}{2}}$ & 1 \\
         
         Symmetric %& - 
         & $\gamma \mathbb{I}_{\prob(Y=1\mid X=x) \geq \frac{1}{2}} + \gamma' \mathbb{I}_{\prob(Y=1\mid X=x) < \frac{1}{2}}$ & C\\
         \bottomrule
    \end{tabular}
    \linebreak
    }
\end{table}

\subsection{CCE Loss} 

\paragraph{Loss function:} $\loss(\model(x), y) = - \sum_{k=1}^{2} \onehoty_k \log \model_k (x) = - \log \model_{y} (x)$

\paragraph{Symmetry check:} 
\begin{align*}
    \sum_{k=1}^{K} \loss(\model(x), k) & = - \sum_{k=1}^{2}\sum_{l=1}^{2}e_l^{(k)} \log \model_k (x) = - \sum_{k=1}^{2} \log \model_k (x)
\end{align*}
Not symmetric.

\paragraph{Risk minimiser:} $\model^*_1(x) = \truemodel _1(x)$

\textit{\\Derivation:}
\begin{align*}
  & \mathcal{J}(\model (x)) = -  \truemodel _1(x) \log \model_1(x) - \truemodel _2(x) \log \model_2(x) \\ & \qquad \quad \;\;  = - \truemodel _1(x) \log \model_1(x) - (1-\truemodel _1(x)) \log (1-\model_1(x))
\\[10pt] &
\Rightarrow
    \frac{\partial \mathcal{J}(\model (x))}{\partial \model_1 (x)} =  - \truemodel _1(x) \frac{1}{\model_1(x)} + (1-\truemodel _1(x))\frac{1}{1-\model_1(x)}
\\[10pt] &
\Rightarrow
    \frac{\partial \mathcal{J}(\model (x))}{\partial \model_1 (x)} = 0 \Rightarrow \model^*_1(x) = \truemodel _1(x) 
\end{align*}

%\subsection{MSE} 

%\paragraph{Loss:} $\loss(\model(x), y) = \sum_{k=1}^{2}(\onehoty_k  - \model_k(x))^2$
%\paragraph{Symmetry check:} 
%
%\begin{align*}
%\sum_{k=1}^{K} \loss(\model(x), k) &= \sum_{k=1}^{2} \sum_{l=1}^{2}(e_l^{(k)} - \model_{l}(x))^2  = 2 - 2 \sum_{k=1}^{2}\model_k (x) + 2 \sum_{k=1}^{2} \model_k(x)^2  = 2 \| \model (x) \|_2^2
%\end{align*}
%
%Not symmetric.

%\paragraph{Risk minimiser:} $f^*_1(x) = \truemodel_1(x)$

%\textit{Derivation:}
%
%\begin{align*}
 %    &\mathcal{J}(\model (x))  = \truemodel _1(x) ((1-\model_1(x))^2 + (-\model_2(x))^2) + \truemodel _2(x) ((-\model_1(x))^2 + (1-\model_2(x))^2) \\ & \qquad \quad \;\; =  2\truemodel _1(x) (1-\model_1(x))^2 + 2(1-\truemodel _1(x)) \model_1(x)^2
%\\[10pt]
%   & \Rightarrow \frac{\partial \mathcal{J}(\model (x))}{\partial \model_1 (x)} =  -4\truemodel _1(x) (1-\model_1(x)) + 4(1-\truemodel _1(x)) \model_1(x) 
%\\[10pt]
 %   & \Rightarrow \frac{\partial \mathcal{J}(\model (x))}{\partial \model_1 (x)} = 0 \Rightarrow \model^*_1(x) = \truemodel _1(x) 
%\end{align*}

\subsection{MAE} 

\paragraph{Loss:} $\loss(\model(x), y) = \sum_{k=1}^{2}|\onehoty_k  - \model_k(x)|$

\paragraph{Symmetry check:} 
\begin{align*}
    \sum_{k=1}^{K} \loss(\model(x), k) & = \sum_{k=1}^{2} \sum_{l=1}^{2} |e_l^{(k)} - \model_{j}(x)| = 2\cdot 2 - 2 = 2
\end{align*}
Symmetric. 

\paragraph{Risk minimiser: } $f^*_1(x) = \mathbb{I}_{\prob(Y=1\mid X=x) \geq \frac{1}{2}}$

\textit{\\Derivation: }
\begin{align*}
     \mathcal{J}(\model (x)) &= \truemodel _1(x) (|1-\model_1(x)| + |-\model_2(x)|) + \truemodel _2(x) (|-\model_1(x)| + |1-\model_2(x)|) \\ & =  2\truemodel _1(x) (1-\model_1(x)) + 2(1-\truemodel _1(x)) \model_1(x)
\end{align*}
If $\truemodel _1(x) \geq \frac{1}{2}$, minimum at $\model_1(x) = 1$, otherwise minimum at $\model_1(x) = 0$. 
We can formulate this as $\model^*_1(x) =\mathbb{I}_{\prob(Y=1\mid X=x) \geq \frac{1}{2}}$. 

\subsection{Sigmoid Loss} 

\paragraph{Loss function: } $\loss(\model (x), y) = \onehoty_1 \frac{1}{1+ e^{\model_1 (x)}} + \onehoty_2 \frac{e^{\model_1(x)}}{1+ e^{\model_1 (x)}}$

\paragraph{Symmetry check:} 
\begin{align*}
    \sum_{k=1}^2 \loss(\model(x), k) & = \loss(\model(x), 1) + \loss(\model(x), 2) = \frac{1}{1 + e^{\model_1 (x)}} + \frac{e^{\model_1 (x)}}{1 + e^{\model_1 (x) }} = 1
\end{align*}
Symmetric.

\paragraph{Risk minimiser:} $f^*_1(x) = \mathbb{I}_{\prob(Y=1\mid X=x) \geq \frac{1}{2}}$

\textit{\\Derivation: }

\begin{align*}
     \mathcal{J} (\model(x)) & = \truemodel _1(x) \frac{1}{1+ e^{\model_1 (x)}} + \truemodel _2(x) \frac{ e^{\model_1 (x)}}{1+ e^{\model_1 (x)}} \\& = \truemodel _1(x) \frac{1}{1+ e^{\model_1 (x)}} + (1 - \truemodel _1(x)) (1 - \frac{1}{1+ e^{\model_1 (x)}})
\end{align*}
\if 
$\frac{d $\mathcal{J} (\model(x)) }{d \model (x)} = \frac{e^{\model(x)}}{(1 + e^{\model(x)})^2} (1-2\truemodel _1(x))  $
$\frac{d \mathcal{J} (\model(x))}{d \model(x)} = 0 \Rightarrow 
$e^{\model}(x) = 0 \Rightarrow \model(x) \rightarrow -\infty$
$\frac{1}{(1 + e^{\model}(x))}^2 = 0 \Rightarrow \model(x) \rightarrow \infty$.
\fi
If $\truemodel _1(x) \geq \frac{1}{2}$, minimum at $\model_1(x) \rightarrow \infty$, otherwise minimum at $\model_1(x) \rightarrow -\infty$.
Since $\model_1(x) \in [0, 1]$ and $\frac{1}{1 + e^{\model_1 (x)}}$ is a (monotonic) decreasing function in $\model_1(x)$, we have a minimum at $\model_1(x) = 1$ for $\truemodel _1(x) \geq \frac{1}{2}$ and at $\model_1(x) = 0$ for $\truemodel _1(x) < \frac{1}{2}$. So, $\model^*_1 (x) =\mathbb{I}_{\prob(Y=1\mid X=x) \geq \frac{1}{2}}$.

\subsection{General Symmetric Loss} 

\paragraph{Loss function:} $\loss (f(x), y)$

\paragraph{Symmetry check: } 
\begin{align*}
    \sum_{k=1}^{2} \loss (f(x), k) = C
\end{align*}
Symmetric by definition.

\paragraph{Risk minimiser: } $\model^*_1(x) = \gamma \mathbb{I}_{\prob(Y=1\mid X=x) \geq \frac{1}{2}} + \gamma' \mathbb{I}_{\prob(Y=1\mid X=x) < \frac{1}{2}}$

\textit{\\Derivation:}
\begin{align*}
   \mathcal{J}(\model (x)) & = \truemodel _1(x) \loss (\model(x), 1) + \truemodel _2(x) \loss (\model(x), 2) \\ & = \truemodel _1(x) \loss (\model(x), 1) + (1 - \truemodel _1(x)) (C - \loss(\model(x), 1)) \\ & = \loss(\model(x), 1)(2\truemodel _1(x)-1) + C(1-\truemodel _1(x)) 
\end{align*}
If $\truemodel _1(x) \geq \frac{1}{2}$, minima at $\text{argmin}_{q \in [0, 1]} \loss(q, 1)$, otherwise minima at $\text{argmax}_{q \in [0,1]} \loss(q, 1)$. Hence, minima is found at 
\begin{align*}
  \model^*_1(x) = \gamma \mathbb{I}_{\prob(Y=1\mid X=x) \geq \frac{1}{2}} + \gamma' \mathbb{I}_{\prob(Y=1\mid X=x) < \frac{1}{2}}  
\end{align*}
with $\gamma \in \text{argmin}_{q \in [0, 1]} \loss(q, 1)$ and $\gamma ' \in  \text{argmax}_{q \in [0, 1]} \loss(q, 1)$. Since we assume $\model_1 (x) \in [0, 1]$, we must also have $\gamma, \gamma' \in [0, 1]$. % and probably $\gamma \in [\frac{1}{2}, 1]$, $\gamma' = [0, \frac{1}{2})$

\section{Complete proofs}
\label{sec:complete_proofs}

Complete proofs of the statements, lemmas and propositions from the main paper follow.\\

%\glemma*
\noindent\textbf{\Cref{lem:func_of_g}.} \textit{\space The conditional probability vector $\noisymodel(x)$ can be written as function of $\truemodel (x)$ according to}
    \begin{equation*}
        \noisymodel(x) = 
        \left(1-\frac{\omega(x)K}{K-1}\right)\truemodel(x) + \frac{\omega(x)}{K-1} \cdot \mathbf{1}_K
    \end{equation*}
    \textit{where $\mathbf{1}_K$ is the vector of ones with length $K$. Moreover, it holds for any two classes $i, j \in \yspace$ that $\noisymodel_i (x) > \noisymodel_j (x)$ if and only if $\truemodel_i (x) > \truemodel_j (x)$.}

\begin{proof}
    For the first part of the proof, note that for any $i \in \yspace$, $\sum_{k \neq i} \truemodel_k (x) = 1 - \truemodel_i (x)$, then
    \begin{align*}
         \noisymodel_{i}(x) & = 
         \sum_{k=1}^{K} \prob(\Noisyy = i \mid Y=k, X=x) \truemodel_k(x)
         \\ & = (1-\omega(x))\truemodel_i (x) + \frac{\omega(x)}{K-1} \sum_{k\neq i} \truemodel_k (x)
        \\ & = 
        (1-\omega(x))\truemodel_i (x) + \frac{\omega(x)}{K-1} (1-\truemodel_i (x) )
        \\ & = 
        \left(1-\frac{\omega(x)K}{K-1}\right)\truemodel_i(x) + \frac{\omega(x)}{K-1}.
    \end{align*}
    Since this holds for all $i \in \yspace$, we can write the result on vector form according to
    \begin{align*}
        \noisymodel(x) & = 
        \left(1-\frac{\omega(x)K}{K-1} \right)\truemodel(x) + \frac{\omega(x)}{K-1} \cdot \mathbf{1}_K 
    \end{align*}
    where $\mathbf{1}_K$ is a vector of ones with length $K$. This finishes the first part of the proof.
    
    For the second part of the proof. Take any two classes $i, j \in \yspace$ and assume $\truemodel_i (x) > \truemodel_j (x)$. Then, from the equation derived above and since $0 \leq \omega(x) < \frac{K-1}{K}$, we get
    \begin{align*}
        \noisymodel_{i}(x) & = 
        \left(1-\frac{\omega(x)K}{K-1}\right)\truemodel_i(x) + \frac{\omega(x)}{K-1} \\ & >  \left(1-\frac{\omega(x)K}{K-1}\right)\truemodel_j(x) + \frac{\omega(x)}{K-1} \\ &= \noisymodel_{j}(x).
    \end{align*}
    Hence, $\truemodel_i (x) > \truemodel_j (x)$ implies $\noisymodel_i (x) > \noisymodel_j (x)$.
    
    Next, assume $\noisymodel_i (x) > \noisymodel_j (x)$. Then by rearranging the same equation, we obtain the following
    \begin{align*}
        \truemodel_i (x) &= \frac{\noisymodel_i (x) - \frac{\omega(x)}{K-1}}{1 - \frac{\omega(x)K}{K-1}} \\ & > \frac{\noisymodel_j (x) - \frac{\omega(x)}{K-1}}{1 - \frac{\omega(x)K}{K-1}} \\ & = \truemodel_j (x).
    \end{align*}
    Hence, $\noisymodel_i (x) > \noisymodel_j (x)$ implies $\truemodel_i (x) > \truemodel_j (x)$. Therefore, $\noisymodel_i (x) > \noisymodel_j (x)$ if and only if $\truemodel_i (x) > \truemodel_j (x)$.
\end{proof}

\begin{lemma}
    For simple non-uniform label noise, $\noisymodel(x) = \truemodel(x)$ if and only if $\omega(x) = 0$ or $\truemodel(x) = \frac{1}{K} \cdot \mathbf{1}_K$.
    \label{lem:diff_g}
\end{lemma}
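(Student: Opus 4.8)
The plan is to reduce everything to the explicit affine formula for $\noisymodel(x)$ in terms of $\truemodel(x)$ provided by \Cref{lem:func_of_g}, and then simply solve the equation $\noisymodel(x) = \truemodel(x)$. Since this is an \emph{if and only if} statement, I will argue that both conditions are equivalent to a single factored identity, so that the two directions fall out simultaneously rather than requiring separate treatment.

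First I would substitute the representation
\begin{equation*}
    \noisymodel(x) = \left(1-\frac{\omega(x)K}{K-1}\right)\truemodel(x) + \frac{\omega(x)}{K-1} \cdot \mathbf{1}_K
\end{equation*}
from \Cref{lem:func_of_g} into the condition $\noisymodel(x) = \truemodel(x)$. Moving all terms involving $\truemodel(x)$ to one side and simplifying the coefficient, the equation collapses to
\begin{equation*}
    \frac{\omega(x)}{K-1}\bigl(K\,\truemodel(x) - \mathbf{1}_K\bigr) = \mathbf{0}.
\end{equation*}
Because $K \geq 2$ so that $K-1 > 0$, this product vanishes exactly when one of its two scalar/vector factors vanishes: either $\omega(x) = 0$, or $K\,\truemodel(x) = \mathbf{1}_K$, the latter being precisely $\truemodel(x) = \frac{1}{K}\cdot\mathbf{1}_K$. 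This single line delivers both implications at once.

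To round off the argument I would note the two easy sanity checks that confirm the converse directions are genuine equalities and not merely necessary conditions: setting $\omega(x) = 0$ in the formula gives $\noisymodel(x) = \truemodel(x)$ immediately, and setting $\truemodel(x) = \frac{1}{K}\cdot\mathbf{1}_K$ yields $\noisymodel(x) = \bigl(\frac{1}{K} - \frac{\omega(x)}{K-1} + \frac{\omega(x)}{K-1}\bigr)\mathbf{1}_K = \frac{1}{K}\cdot\mathbf{1}_K$, so the uniform distribution is a fixed point of the noise map for any admissible $\omega(x)$. I do not anticipate any real obstacle here: the entire content is the factorisation above, and the only point requiring a word of care is invoking $K - 1 > 0$ to justify that the scalar factor $\frac{\omega(x)}{K-1}$ can only annihilate the product through $\omega(x) = 0$ itself. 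No appeal to the upper bound on $\omega(x)$ is needed for this lemma.
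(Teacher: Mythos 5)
Your proof is correct and follows essentially the same route as the paper's: both substitute the affine formula from \Cref{lem:func_of_g} into $\noisymodel(x) = \truemodel(x)$ and reduce to the identity $\frac{\omega(x)}{K-1}\bigl(K\,\truemodel(x) - \mathbf{1}_K\bigr) = \mathbf{0}$, which forces $\omega(x)=0$ or $\truemodel(x) = \frac{1}{K}\cdot\mathbf{1}_K$. The only difference is presentational — the paper proves the two implications separately, while you obtain both at once from the single factorisation — and your observation that only $K-1>0$ (not the upper bound on $\omega(x)$) is needed matches the paper's argument.
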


\begin{proof}
    First, we show that $\omega(x) = 0$ or $\truemodel(x) = \frac{1}{K} \cdot \mathbf{1}_K$ implies $\truemodel(x) = \noisymodel(x)$. Assume $\omega(x) = 0$, then from \cref{lem:func_of_g}:
    \begin{align*}
        \noisymodel(x) = 
        \left(1-\frac{0 \cdot K}{K-1}\right)\truemodel(x) + \frac{0}{K-1} \cdot \mathbf{1}_K = \truemodel(x)
    \end{align*}
    Similarly, if $\truemodel(x) = \frac{1}{K} \cdot \mathbf{1}_K$, then 
    \begin{align*}
        \noisymodel(x) = 
        \left(1-\frac{\omega(x) K}{K-1}\right)\frac{1}{K} \cdot \mathbf{1}_K + \frac{\omega(x)}{K-1} \cdot \mathbf{1}_K = \frac{1}{K} \cdot \mathbf{1}_K
    \end{align*}
    i.e. $\noisymodel(x) = \truemodel(x)$ and so, $\omega(x) = 0$ or $\truemodel(x) = \frac{1}{K} \cdot \mathbf{1}_K$ implies $\truemodel(x) = \noisymodel(x)$.
    
    Next, we show that $\noisymodel(x) = \truemodel (x)$ implies $\omega(x) = 0$ or $\truemodel(x) = \frac{1}{K} \cdot \mathbf{1}_K$. Assume $\noisymodel(x) = \truemodel(x)$, then from \cref{lem:func_of_g}
    \begin{align*}
        \noisymodel(x) = 
        \left(1-\frac{\omega(x) K}{K-1}\right)\truemodel(x) + \frac{\omega(x)}{K-1} \cdot \mathbf{1}_K = \truemodel (x)
    \end{align*}
    which implies $\omega(x) = 0$ or
    \begin{align*}
        K\truemodel(x) - \mathbf{1}_K = 0
    \end{align*}
    or, equivalently, $\truemodel(x) = \frac{1}{K} \cdot \mathbf{1}_K$. Therefore, if $\noisymodel(x) = \truemodel(x)$, then $\omega(x) = 0$ or $\truemodel(x) = \frac{1}{K} \cdot \mathbf{1}_K$. Hence, we have shown that $\noisymodel(x) = \truemodel(x)$ if and only if $\omega(x) = 0$ or $\truemodel(x) = \frac{1}{K} \cdot \mathbf{1}_K$.
    %As a result, $\noisymodel(x) = \truemodel(x)$ if and only if $\omega(x) = 0$ or $\truemodel(x) = \frac{1}{K} \cdot \mathbf{1}_K$.
\end{proof}

\newpage
%\equaldecisions*
\noindent\textbf{\Cref{prop:decision_boundaries}.} \textit{\space Assume that the prediction is taken as the most probable class, then $\noisymodel$ has the same decision boundaries as $\truemodel$.}

\begin{proof}
     To show that $\noisymodel$ and $\truemodel$ have the same decision boundaries, we will show that $\noisymodel_i(x) = \noisymodel_j(x)$ if and only if  $\truemodel_i(x) = \truemodel_j(x)$ for all $x \in \xspace$ for which this holds and for any two classes $i, j \in \yspace$. In addition, \cref{lem:func_of_g} states that classes are not flipped, as $\noisymodel_i(x) > \noisymodel_j(x)$ if and only if $\truemodel_i(x) > \truemodel_j(x)$.
     
     We first show that if $\truemodel_i(x) = \truemodel_j(x)$, then $\noisymodel_i(x) = \noisymodel_j(x)$. For $i, j \in \yspace, \; i \neq j$, assume $\truemodel_{i}(x)=\truemodel_{j}(x)$. Then, for any $x$ for which this holds (i.e. any $x$ along the decision boundary between classes $i$ and $j$)% and for which $0 \leq \omega(x) < \frac{K-1}{K}$
    \begin{align*}
        \noisymodel_{j}(x) & = \left(1-\frac{\omega(x)K}{K-1}\right)\truemodel_j(x) + \frac{\omega(x)}{K-1} \\ & = \left(1-\frac{\omega(x)K}{K-1}\right)\truemodel_i(x) + \frac{\omega(x)}{K-1} \\ &= \noisymodel_i (x).
    \end{align*}
    This follows directly from \cref{lem:func_of_g} and since $\truemodel_i (x) = \truemodel_j (x)$.
    
    Next, we show that $\noisymodel_i(x) = \noisymodel_j(x)$ implies $\truemodel_i(x) = \truemodel_j(x)$. Assume $\noisymodel_{i}(x)=\noisymodel_{j}(x)$ for some $i, j \in \yspace, \; i \neq j$. From \cref{lem:func_of_g}, we get 
    \begin{align*}
        \truemodel_i (x) & = \frac{\noisymodel_i(x) - \frac{\omega(x)}{K-1}}{1-\frac{\omega(x) K}{K-1}}
    \end{align*}
    and therefore,
    \begin{align*}
        \truemodel_i (x) & = \frac{\noisymodel_i(x) - \frac{\omega(x)}{K-1}}{1-\frac{\omega(x) K}{K-1}} \\ & = 
        \frac{\noisymodel_j(x) - \frac{\omega(x)}{K-1}}{1-\frac{\omega(x) K}{K-1}}\\ & = \truemodel_j (x),
    \end{align*}
    i.e. if $\noisymodel_i(x) = \noisymodel_j(x)$, then $\truemodel_i(x) = \truemodel_j(x)$.
    
    This shows that for any $x \in \xspace$ it holds that $\noisymodel_i (x) = \noisymodel_j (x)$ if and only if $\truemodel_i (x) = \truemodel_j (x)$. Note that the argument easily extends to equality between several classes, i.e. $\noisymodel_{i}(x)=\noisymodel_{j}(x)=\dots=\noisymodel_{l}(x)$ if and only if $\truemodel_{i}(x)=\truemodel_{j}(x)=\dots=\truemodel_{l}(x)$, since we still consider pairwise equality. In turn, this implies, together with \cref{lem:func_of_g}, that $\noisymodel$ and $\truemodel$ share decision boundaries.
\end{proof}

%\higherentropy*
\noindent\textbf{\Cref{prop:conditional_entropy}.} \textit{\space The average conditional entropy of $\noisymodel$ is higher than that of $\truemodel$, that is $\expectation_X[\entropy{\noisymodel(X)}] > \expectation_X[\entropy{\truemodel(X)}]$.}

\begin{proof}
To show that $\noisymodel$ has a higher average conditional entropy than $\truemodel$, we first note that the conditional entropy is a strictly concave function. This follows from the fact that the entropy is a sum of strictly concave functions $h(z) = - z\log (z)$.  %  
Hence, for any two distinct vectors $\model^1(x), \model^2(x)$
\begin{align*}
    \entropy{\model^{\lambda}(x)} > (1-\lambda)\entropy{\model^1(x)} + \lambda \entropy{\model^2(x)}, \quad \lambda \in (0, 1)
\end{align*}
with $f^\lambda (x) = (1-\lambda) f^1(x) + \lambda f^2 (x)$. The unique maximum of the conditional entropy is found at $u = \frac{1}{K} \cdot \mathbf{1}_K$, the vector of uniform probability.

Notice that for $\lambda = \frac{\omega(x) K}{K-1}$, we can use \cref{lem:func_of_g} to write $\noisymodel(x)$ as a linear combination of the form
\begin{align*}
    \noisymodel(x) = (1-\lambda) \truemodel(x) + \lambda \frac{1}{K} \cdot \mathbf{1}_K
\end{align*}
Let $f^1 (x) = \truemodel (x) $, $f^2 (x) = u$ (the uniform vector) and $\lambda = \frac{\omega(x) K}{K-1}$. For $\omega(x) > 0$ and $\truemodel (x) \neq u$, we have
\begin{align*}
    \entropy{\noisymodel(x)} & >   (1-\lambda)\entropy{\truemodel(x)} + \lambda \entropy{u} \; \\ & > (1-\lambda)\entropy{\truemodel(x)} + \lambda \entropy{\truemodel (x)} \\ & = \entropy{\truemodel (x)}.
\end{align*}
The last inequality follows since the maximum at $u$ is unique and since $\lambda > 0$. Hence, for $\truemodel (x) \neq u$ and $\omega(x) > 0$, it holds that $\entropy{\noisymodel(x)} > \entropy{\truemodel (x)}$. For $\truemodel (x) = u$ or if $\omega(x) = 0$, it follows from \cref{lem:diff_g} that $\noisymodel(x) =  \truemodel(x)$ and, therefore, $\entropy{\noisymodel(x)} = \entropy{\truemodel (x)}$. 

Let $\xspace_{1} = \{x \in \xspace; \; \noisymodel(x) = \truemodel(x)\}$ and $\xspace_{2} = \{x \in \xspace; \; \noisymodel(x) \neq \truemodel(x)\}$. Moreover, let $\mu_X$ be the marginal probability distribution of $X$. Since $\prob\left(\{\omega(X) > 0\} \cap \{\truemodel(X) \neq u\} \right) > 0$ implies $\prob(X \in \xspace_2) > 0$  (\cref{lem:diff_g}), we have
\begin{align*}
    \expectation_X [\entropy{\noisymodel(X)} ]
    & =  \int_{\xspace_1} \entropy{\noisymodel(x)} \mu_X(dx) + \int_{\xspace_2} \entropy{\noisymodel(x)} \mu_X(dx)
    \\ & > \int_{\xspace_1} \entropy{\truemodel(x)}  \mu_X(dx) + \int_{\xspace_2} \entropy{\truemodel(x)} \mu_X(dx) \\ &= 
    \expectation_X [\entropy{\truemodel (X)}]
\end{align*}
meaning that $\noisymodel$ has a higher conditional entropy than $\truemodel$ on average over $\xspace$.

\end{proof}

\begin{lemma}
    Let $M \in \argmax{k \in \yspace} \noisymodel_k (x)$, then $\truemodel_M (x) \geq \noisymodel_M (x)$ with equality if and only if $\truemodel (x) = \noisymodel (x)$.
    \label{lem:argmax_lemma}
\end{lemma}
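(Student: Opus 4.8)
The plan is to reduce everything to the explicit formula for $\noisymodel$ in terms of $\truemodel$ from \cref{lem:func_of_g} together with the order-preservation property proved there. First I would observe that since \cref{lem:func_of_g} guarantees $\noisymodel_i(x) > \noisymodel_j(x)$ if and only if $\truemodel_i(x) > \truemodel_j(x)$, the maximisers of $\noisymodel(x)$ and $\truemodel(x)$ coincide. Hence any $M \in \argmax{k \in \yspace} \noisymodel_k(x)$ is also a maximiser of $\truemodel(x)$, and in particular $\truemodel_M(x) \geq \frac{1}{K}$, because the largest entry of a probability vector is at least its average $\frac{1}{K}$.

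Next I would substitute the coordinate form $\noisymodel_M(x) = \left(1 - \frac{\omega(x)K}{K-1}\right)\truemodel_M(x) + \frac{\omega(x)}{K-1}$ from \cref{lem:func_of_g} into the desired inequality $\truemodel_M(x) \geq \noisymodel_M(x)$. After cancelling a $\truemodel_M(x)$ from both sides, this rearranges to $\frac{\omega(x)K}{K-1}\truemodel_M(x) \geq \frac{\omega(x)}{K-1}$. When $\omega(x) = 0$ both sides vanish and the inequality holds trivially (indeed $\noisymodel(x) = \truemodel(x)$ by \cref{lem:diff_g}); when $\omega(x) > 0$ I can divide by the positive factor $\frac{\omega(x)K}{K-1}$, reducing the claim to exactly $\truemodel_M(x) \geq \frac{1}{K}$, which was established in the previous step.

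For the equality characterisation, I would trace back through the same chain of rearrangements: $\truemodel_M(x) = \noisymodel_M(x)$ holds if and only if $\frac{\omega(x)K}{K-1}\truemodel_M(x) = \frac{\omega(x)}{K-1}$, that is, if and only if $\omega(x) = 0$ or $\truemodel_M(x) = \frac{1}{K}$. The key remaining point is that, because $M$ is a maximiser of $\truemodel$, the condition $\truemodel_M(x) = \frac{1}{K}$ (the maximum attaining the average value) forces every entry to equal $\frac{1}{K}$, i.e. $\truemodel(x) = \frac{1}{K}\cdot\mathbf{1}_K$; the converse is immediate. Thus equality holds if and only if $\omega(x) = 0$ or $\truemodel(x) = \frac{1}{K}\cdot\mathbf{1}_K$, which by \cref{lem:diff_g} is precisely the condition $\truemodel(x) = \noisymodel(x)$.

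I expect the main obstacle to be nothing computational but rather making the equality direction airtight: specifically, justifying cleanly that the maximal entry of a probability vector equals its average exactly when the vector is uniform, and then linking this scalar equality condition back to the vector identity $\truemodel(x) = \noisymodel(x)$ by invoking \cref{lem:diff_g} instead of re-deriving that equivalence by hand.
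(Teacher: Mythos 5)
Your proposal is correct and follows essentially the same route as the paper's proof: both rest on the coordinate formula and order-preservation from \cref{lem:func_of_g}, reduce the inequality to $\truemodel_M(x) \geq \frac{1}{K}$ (you by direct rearrangement, the paper by viewing $\noisymodel_M(x)$ as a decreasing linear function of $\omega(x)$), and characterise equality via ``$\omega(x)=0$ or the vector is uniform.'' The only real difference is cosmetic: you close the equality case by citing \cref{lem:diff_g}, whereas the paper re-derives that equivalence inline from \cref{lem:func_of_g}, so your version is a slight streamlining of the same argument.
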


\begin{proof}

We first show that for any  $M \in \argmax{k} \noisymodel_k (x)$, it holds that $\truemodel_M (x) \geq \noisymodel_M (x)$. Note that it follows from \cref{lem:func_of_g} and \cref{prop:decision_boundaries} that 
\begin{align*}
    \argmax{k} \truemodel_k (x) = \argmax{k} \noisymodel_k (x)
\end{align*}
such that if $M \in \argmax{k} \noisymodel_k (x)$, then $M \in \argmax{k} \truemodel_k (x)$. Note also that it must hold that $\noisymodel_M(x), \truemodel_M (x) \geq \frac{1}{K}$, since $\noisymodel(x)$ and $\truemodel(x)$ are both vectors of norm 1. From \cref{lem:func_of_g}, we have
\begin{align*}
    \noisymodel_M (x) & = \left(1-\frac{\omega(x)K}{K-1}\right)\truemodel_M(x) + \frac{\omega(x)}{K-1} \\ &= (1-K\truemodel_M(x)) \frac{\omega(x)}{K-1} + \truemodel_M(x)
\end{align*}
which is a linear function in $\omega(x)$. Since $\truemodel_M(x) \geq \frac{1}{K}$, $\noisymodel_M (x)$ decreases with $\omega(x)$ and an upper bound is found at $\omega(x) = 0$
\begin{align*}
    %\min \noisymodel_M (x) > (1 - K\truemodel_M (x)) \frac{\omega(x)}{K-1} + \truemodel_M(x) \biggr\rvert_{\omega(x) = \frac{K-1}{K}} = \frac{1}{K} \\
    \noisymodel_M (x) \leq (1 - K\truemodel_M (x)) \frac{\omega(x)}{K-1} + \truemodel_M(x) \biggr\rvert_{\omega(x) = 0} = \truemodel_M (x),
\end{align*}
which finishes the first part of the proof.

Next, we show that $\truemodel_M (x) = \noisymodel_M (x)$, for any $M \in  \argmax{k} \truemodel_k (x)$, if and only if $\truemodel (x) = \noisymodel (x)$. First, it follows directly that if $\truemodel (x) = \noisymodel (x)$ then $\truemodel_k(x) = \noisymodel_k(x) \sforall k \in \yspace$ and therefore, $\truemodel_M (x) = \noisymodel_M (x)$. Second, to see that $\truemodel_M (x) = \noisymodel_M (x)$ implies $\truemodel (x) = \noisymodel(x)$, we use \cref{lem:func_of_g} with $\truemodel_M (x) = \noisymodel_M (x)$, to get
\begin{align*}
    \noisymodel_M (x) = \left(1-\frac{\omega(x)K}{K-1}\right)\noisymodel_M(x) + \frac{\omega(x)}{K-1}
\end{align*}
which implies $\omega(x)=0$ or 
\begin{align*}
    (K \noisymodel_M (x) - 1) = 0,
\end{align*}
i.e. $\noisymodel_M(x) = \frac{1}{K}$. For $\omega(x)=0$, it follows directly from \cref{lem:func_of_g} that $\truemodel (x) = \noisymodel (x)$. In the second case, notice that $\noisymodel_M(x) = \frac{1}{K}$ must imply that $\noisymodel_k = \frac{1}{K} \sforall k \in \yspace$, since $M \in \argmax{k} \noisymodel_{k} (x)$ and $\ \|\noisymodel_{k} (x)\|_1 = 1$. Then, from \cref{lem:func_of_g}, we get
\begin{align*}
    \truemodel (x) & = %\frac{\noisymodel (x) - \frac{\omega(x)}{K-1}}{1 - \frac{\omega(x)K}{K-1}} \\&=
    \frac{(\frac{1}{K} - \frac{\omega(x)}{K-1}) \cdot \mathbf{1}_K}{1 - \frac{\omega(x)K}{K-1}} 
    \\ & = 
    \frac{\frac{1}{K}(1 - \frac{\omega(x)K)}{K-1}}{(\frac{1}{K} - \frac{\omega(x)K}{K-1})} 
    \\ & = \frac{1}{K} \cdot \mathbf{1}_K
\end{align*}
which means that $\truemodel(x) = \noisymodel(x)$. Therefore, $\truemodel_M (x) \geq \noisymodel_M (x)$ with equality if and only if $\truemodel (x) = \noisymodel (x)$.

\end{proof}

%\noisycalibration*
\noindent\textbf{\Cref{prop:noisyy_calibration}.} \textit{\space The vector of conditional probabilities $\noisymodel (X)$ is not calibrated with respect to the distribution $\prob(Y \mid X)$ over clean labels.}
\newcommand{\wpone}{\text{almost surely}}
\newcommand{\I}[1]{\mathbbm{1}(#1)}

\begin{proof}
Let $Z = \truemodel (X)$ and $\tilde{Z} = \noisymodel (X)$.
For $\noisymodel(X)$ to be calibrated we require,
\begin{align*}
    \prob(Y=k \mid \tilde{Z}) &= \tilde{Z}_k, \sforall k \in \yspace
\end{align*}
almost surely. Hence, to show that $\noisymodel$ is not calibrated, it is enough to show that $\prob(Y = k \mid \tilde{Z}) \neq \tilde{Z}_k$, with probability larger than 0, for any $k \in \yspace$. 
Let
\begin{align*}
    M = \inf\argmax{k} \tilde{Z}_k,
\end{align*}
where the infimum is taken just to select a unique index in the case when the maximising argument is not unique. 
We know from \cref{lem:func_of_g} and \cref{prop:decision_boundaries} that
\begin{align*}
    M = \inf\argmax{k} {Z}_k 
\end{align*}
meaning that $M$ is a $\sigma(\tilde{Z})\cap\sigma(Z)$-measurable random variable. From \cref{lem:argmax_lemma} we know that
\begin{align*}
    Z_M \geq \tilde Z_{M}
\end{align*}
with equality if and only if $Z = \tilde Z$ \wpone.
From the tower property of conditional expectation we get
\begin{align*}
    \prob(Y=M \mid \tilde{Z})
    = \expectation_{Z \mid \tilde{Z}}[\expectation_{Y \mid Z, \tilde{Z}}[\mathbb{I}_{Y=M}]]
    = \expectation_{Z \mid \tilde Z}[ Z_M]
    %\geq \tilde Z_{K}.
\end{align*}
where the second equality follows from the fact that $Y$ is conditionally independent of $\tilde Z$ given $Z$, which in turn follows from the properties of the selected noise model,
and that
\begin{align*}
    \prob(Y=M \mid Z) = Z_M
\end{align*}
by definition.
%with equality if and only if $Z = \tilde Z$ \wpone.
Hence, since $\prob (Z \neq \tilde Z)>0$ by assumption, there is a set with measure strictly larger than zero on which
\begin{align*}
    \prob(Y=M \mid \tilde{Z}) > \tilde Z_{M}
\end{align*}
and
$\noisymodel(X)$ is not calibrated for $\prob(Y \mid X)$.

\end{proof}

\newpage

%\robustnotsp*
\noindent\textbf{\Cref{prop:robustness_strictly_proper}.} \textit{\space Robust loss functions (\cref{def:robustness_condition}) are not strictly proper.}
\begin{proof}
    Assume that the loss function $\loss$ is robust according to \cref{def:robustness_condition}, i.e. $\loss \in \losspace{R}$. 
    Since $\loss$ is robust, it holds for all $\model^* \in \argmin{\model} \risk (\model)$ that 
    \begin{align*}
        \noisyrisk (\model^*) \leq \noisyrisk (\model), %\sforall \model^* \in \modelspace{\ell}^*,
        \quad \sforall \model \in \modelspace{},
    \end{align*}
    with equality only if $\model$ is also in the set of asymptotic risk minimisers.
    
    Assume now that $\loss$ is strictly proper. Then,
    \begin{align*}
       \argmin{\model} \risk (\model) = \{\truemodel\}.
    \end{align*}
    In parallel, strictly properness implies
    \begin{align*}
         \argmin{\model} \noisyrisk (\model) = \{\noisymodel\}
    \end{align*}
    and therefore,
    \begin{align*}
        \noisyrisk (\noisymodel) \leq \noisyrisk (\truemodel),
        %\modelspace{\ell}^* \cap \tilde{\mathcal{F}}_{\ell}^* = \argmin{\model} = \emptyset
    \end{align*}
    with equality only in the noise-free case where $\noisymodel=\truemodel$. Otherwise, this is a contradiction. As a result, $\loss \notin \losspace{SP}$. Since $\loss$ is an arbitrary robust loss function, we have shown that
    \begin{align*}
        \losspace{R} \cap \losspace{SP} = \emptyset.
    \end{align*}
\end{proof}

%\symmetriccalibration*
\noindent\textbf{\Cref{prop:robustness_calibrated}.} \textit{\space Symmetric loss functions (\cref{def:symmetric_loss}) are not calibration-based strictly proper.} 
\begin{proof}
For $\losspace{S} \cap \losspace{CSP} = \emptyset$ to hold, we require that every symmetric loss function has at least one asymptotic risk minimiser that is not calibrated. That is, every $\loss \in \losspace{S}$, has at least one risk minimiser $\model^* \notin \modelspace{C}$, for at least one conditional distribution $\prob(Y \mid X)$ over the target, $Y$, and one probability distribution, $\mu_X$, over the input, $X$. 

Consider the binary case and a symmetric loss function with (point-wise) asymptotic risk minimisers defined in \cref{eq:symmetric_loss_minima}. For an arbitrary symmetric loss function $\loss \in \losspace{S}$ and if at least one risk minimiser exists, one of the following holds for the parameters $\gamma, \gamma'$ in \cref{eq:symmetric_loss_minima}:

\begin{enumerate}[label=(\roman*)]
    \item \label{itm:first_option} Both $\gamma$ and $\gamma'$ are unique.
    \item \label{itm:second_option} At least one of $\gamma$, $\gamma'$ is not unique.
\end{enumerate}
Following the definitions of $\gamma$ and $\gamma'$, and since $\loss$ is solely a function of a probability vector $q$ and a label $y$, we know that $\gamma$ and $\gamma'$ are both independent of the given input $x$. Hence, for \cref{itm:first_option}, the full risk minimiser will take the form
\begin{align*}
    \model^{*}_{1}(x) = \begin{cases}
    \gamma,\quad \text{if} \;  \prob(Y=1 \mid X=x) \geq \frac{1}{2} \\
    \gamma', \quad \! \text{otherwise}
    \end{cases}
    \label{eq:constant_minimiser}
\end{align*}
$\forall x \in \xspace$ and where $\gamma, \gamma'$ are constants.

Let $\xspace_1 = \{x \in \xspace; \; \prob(Y=1 \mid X=x) \geq \frac{1}{2}\}$ and $\xspace_2 = \{x \in \xspace; \; \prob(Y=1 \mid X=x) < \frac{1}{2} \}$. For $\model^*$ to be calibrated, we require 
\begin{align*}
    &\prob(Y = 1 \mid \model^{*}_{1}(X) = \gamma) = \prob(Y = 1 \mid X \in \xspace_1) = \gamma, \\
    &\prob(Y = 1 \mid \model^{*}_{1}(X) = \gamma') = \prob(Y = 1 \mid X \in \xspace_2) = \gamma'.
\end{align*}
This is true if $\gamma$ (coincidentally) matches the average probability of $Y=1$ over $\xspace_1$ and $\gamma'$ is equal to the average probability of $Y=1$ over $\xspace_2$ according to
% Adderar detta något?
\begin{align*}
    &\prob(Y=1 \mid X \in \xspace_1) = \frac{\int_{\xspace_1} \prob(Y=1 \mid X=x)\mu_X(dx)}{\int_{\xspace_1} \mu_X(dx)} = \gamma, \\
    &\prob(Y=1 \mid X \in \xspace_2) = \frac{\int_{\xspace_2} \prob(Y=1 \mid X=x)\mu_X(dx)}{\int_{\xspace_2} \mu_X(dx)}  = \gamma'.
\end{align*}
To see that the model $\model^*$ is not calibrated in general, assume that there is a conditional probability $\prob(Y \mid X)$ and marginal distribution $\mu_X$ for which the model is calibrated, i.e. the equations above hold. Then, consider any other data distribution for which $\mu_X$ is the same but where the conditional probability over $Y$ can be described by
\begin{align*}
    \prob'(Y=1 \mid X=x) = (1-2\alpha)\prob(Y=1 \mid X=x) + \alpha, \quad 0 < \alpha < \frac{1}{2},
\end{align*}
such that $\xspace_1$ and $\xspace_2$, and consequently $f^*$, remain the same.
For this data distribution and for any suitable solution $\gamma \in [\frac{1}{2}, 1], \; \gamma' \in [0, \frac{1}{2})$, we find that
\begin{align*}
    \prob'(Y=1 \mid X \in \xspace_1) & = \frac{\int_{\xspace_1} \prob'(Y=1 \mid X = x)\mu_X(dx)}{\int_{\xspace_1}\mu_X(dx)} \\ &= \frac{\int_{\xspace_1} \left((1-2\alpha)\prob(Y=1 \mid X=x) + \alpha\right)\mu_X(dx)}{\int_{\xspace_1}\mu_X(dx)} \\ & = (1-2\alpha)\gamma + \alpha \leq \gamma
\end{align*}
with equality only if $\gamma=\frac{1}{2}$. Similarly,
\begin{align*}
    \prob'(Y=1 \mid X \in \xspace_2) & = \frac{\int_{\xspace_2} \prob'(Y=1 \mid X = x)\mu_X(dx)}{\int_{\xspace_2} \mu_X(dx)} \\ & = (1-2\alpha)\gamma' + \alpha > \gamma'.
\end{align*}
As a result, $\model^*$ is not calibrated for the conditional distribution $\prob'(Y\mid X)$. Consequently, we can conclude that there exists a conditional distribution $\prob(Y \mid X)$ and marginal $\mu_X$ for which the asymptotic risk minimiser $\model^*$ of $\loss$ is not calibrated.

Now, assume \cref{itm:second_option} holds, i.e. the loss $\loss$ has two or more (point-wise) asymptotic risk minimers. Since every combination $(\gamma, \gamma')$, from the set of point-wise asymptotic risk minimisers of $\loss$, is equally viable for every $x \in \xspace$ (they are all independent of $x$ and minimise the point-wise risk), at least one risk minimiser can be formulated according to the unique risk minimiser in case \cref{itm:first_option}. Hence, there exist at least one risk minimiser $\model^*$ for which it holds that $\model^* \notin \modelspace{C}$ for some $\prob(Y \mid X)$ and $\mu_X$. 

With the arguments put forth, it holds for all symmetric loss functions that for some conditional distribution $\prob(Y\mid X)$ and marginal $\mu_X$, there exists an asymptotic risk minimiser that is not calibrated. It follows that symmetric loss functions are not calibration-based strictly proper, i.e. $\losspace{S} \cap \losspace{CSP} = \emptyset$.
\end{proof}

\section{Experimental details}
The empirical experiments were performed using Python and the Pytorch deep learning library \parencite{Paszke2019}. Experimental details follow in this section.

\subsection{Simple Noise Example}
For the simple noise example in \cref{fig:intro_example}, the models used were fully connected neural networks with one hidden layer of 50 hidden units and with ReLU activation. The models were trained on two-dimensional circle data\footnote{\url{https://scikit-learn.org/stable/modules/generated/sklearn.datasets.make_circles.html}} with 5,000 observations, generated with the scikit-learn library \parencite{Pedregosa2011}. For the noisy data set, labels were flipped with a uniform flip probability of $\omega=0.2$. Both models were trained with categorical cross-entropy loss and ADAM optimization \parencite{Kingma2015}. We used a constant learning rate of 0.1 and a batch size of 100. A separate validation set of size 1,000 was used for early stopping, where the training was stopped if the current validation loss value exceeded the minimum achieved with more than $10\%$. From that, the model with the smallest validation loss was selected. The models were evaluated in terms of accuracy on a separately generated, clean, test data set of 1,000 samples. In addition, plots of the predicted class 1 probability for each model were generated on a grid of range $(-1.5, 1.5)$ in both dimensions.

\subsection{Robust Loss Functions and Overfitting}
For the empirical evaluation of robustness against overfitting, all models used were fully connected neural networks with one hidden layer of 500 hidden units and LeakyReLU activation. The models were trained with ADAM optimisation \parencite{Kingma2015} with a constant learning rate of 0.005 and a batch size of 100. The hyperparameters were selected such that the general trends of the training dynamics could be observed. 

The data used for training was the MNIST training data set \parencite{Lecun1998} where 50,000 of the data points were used for training and 10,000 for validation. The images were flattened prior to training. Two sets of data with symmetric label noise were created by random flipping of labels. For the first noisy set, a flip probability of $\omega=0.3$ was used and for the second set, we used $\omega=0.5$. The original data set was assumed to be noise free, corresponding to a flip probability of $\omega=0.0$. 

The models trained with mean absolute error (we use the built-in L1Loss in Pytorch with mean reduction, which in our framework corresponds to training with MAE $\cdot \frac{1}{K}$) were trained for 5,000 epochs and evaluated every 10$^{th}$ epoch. The models were initialised randomly or from the weights obtained by a separate model trained with categorical cross-entropy (CCE) loss on the corresponding clean or noisy data set. The models trained with CCE loss, both for the purpose of pre-training and for the purpose of separate evaluation, were trained for 500 epochs and evaluated every epoch, if relevant.

The models were evaluated both on the data set on which they were trained and on the separate, presumably clean, MNIST test data set of 10,000 observations. We evaluated all models in terms of accuracy. In addition, we compared the training loss (MAE) on the respective training sets for the models trained with MAE, with and without pre-training.

\section{Information-theoretic loss function}
Apart from the symmetric loss functions identified in \cite{Ghosh2015, Ghosh2017}, the information-theoretic loss function proposed by \parencite{Xu2019} is robust according to \cref{def:robustness_condition} under symmetric label noise. We will show that while this loss function is robust, it does not recover $\truemodel$ and it is not calibration-based strictly proper. The information-theoretic loss function is based on Determinant-based Mutual Information (DMI) and is defined as
\begin{align*}
    \loss(\model(X), Y) = -\log |\text{det}(\prob(\hat{Y}, Y)|, \quad \hat{Y} \sim \model(X).
\end{align*}
In the equation, we use $|\cdot|$ to denote the absolute value and $\prob(\hat{Y}, Y)$ should be interpreted as the $K \times K$ probability matrix corresponding to the joint distribution of $\hat{Y}$ and $Y$. 

For instance-independent label noise, e.g. symmetric noise, it is possible to show \parencite{Xu2019} that 
\begin{align*}
    \loss(\model(X), \Noisyy) = \loss(\model(X), Y) + C
\end{align*}
for a constant $C$. Following this, it can be concluded that \cref{def:robustness_condition} is fulfilled. 

Next, we derive the asymptotic (risk) minimisers of the information-theoretic loss function. Assume $\yspace =  \{1, 2\}$, then
\begin{align*}
    \loss(\model(X), Y)  = &  - \log \; |\prob(\hat{Y}=1, Y=1)\prob(\hat{Y}=2, Y=2)  - \prob(\hat{Y}=1, Y=2)\prob(\hat{Y}=2, Y=1)|.
\end{align*}
The loss function is minimised when the absolute value of the determinant is maximised. To determine what this means for the model $\model$, we factorise $\prob(\hat{Y}, Y)$ according to  $\prob(\hat{Y}, Y) = \prob(\hat{Y} \mid Y) \prob(Y)$ and use $\prob(\hat{Y}=2 \mid Y=i) = 1 - \prob(\hat{Y}=1 \mid Y=i)$ to obtain
\begin{align*}
    \loss(\model(X), Y) =& %& - \log \;  |\expectation_X[\model_1(X)\truemodel_1(X)] \expectation_X[(1-\model_1(X))(1-\truemodel_1(X))] \\ & - \expectation_X[\model_1(X)(1-\truemodel_1(X))]\expectation_X[(1-\model_1(X))\truemodel_1(X)]| \\  =
    %- \log \;  |\prob(Y=1)(1-\prob(Y=1))(\prob(\hat{Y}=1 \mid Y=1)(1-\prob(\hat{Y}=1 \mid Y=2)) \\ &- (1-\prob(\hat{Y}=1 \mid Y=1))\prob(\hat{Y}=1 \mid Y=2))| \\ 
    %=& 
    - \log \;  |\prob(Y=1)(1-\prob(Y=1))(\prob(\hat{Y}=1 \mid Y=1)- \prob(\hat{Y}=1 \mid Y=2))| \\ =& - \log \;  |\prob(Y=1)(1-\prob(Y=1))(\expectation_{\hat{Y} \mid Y=1}[\mathbb{I}_{\hat{Y} = 1}]- \expectation_{\hat{Y} \mid Y=2}[\mathbb{I}_{\hat{Y} = 1}])|.
\end{align*}
Using the tower property of conditional expectation, we get
\begin{align*}
    \loss(\model(X), Y) =&
     - \log \;  |\prob(Y=1)(1-\prob(Y=1))(\expectation_{X \mid Y=1}[\expectation_{\hat{Y} \mid X, Y=1}[\mathbb{I}_{\hat{Y} = 1}]]- \expectation_{X \mid Y=2}[\expectation_{\hat{Y} \mid X, Y=2}[\mathbb{I}_{\hat{Y} = 1}]])| \\ =&
     - \log \;  |\prob(Y=1)(1-\prob(Y=1))(\expectation_{X \mid Y=1}[\model_1(X)]- \expectation_{X \mid Y=2}[\model_1(X)])|,
\end{align*}
where the last equality follows as $\hat{Y}$ is independent of $Y$ given $X$. Next, we use Bayes' theorem to rewrite the expression further
\begin{align*}
    \loss(\model(X), Y) =&
     - \log \;  |(1-\prob(Y=1)) \expectation_X[\model_1(X)\prob(Y=1\mid X)] - \prob(Y=1)\expectation_X[\model_1(X)(1-\prob(Y=1|X))]| \\ =&
     - \log \;  |\expectation_X[\model_1(X)(\prob(Y=1|X)-\prob(Y=1))]|.
\end{align*}
As $0 \leq \model_1(X) \leq 1$, the loss is minimised if $\model_1(x) = 1$ (or, alternatively, $\model_1(x) = 0$) for all $x \in \xspace$ for which $\prob(Y=1\mid X=x)-\prob(Y=1) \geq 0$ and $\model_1(x) = 0$ ($\model_1(x) = 1$) for all $x\in \xspace$ with $\prob(Y=1\mid X=x)-\prob(Y=1) < 0$. Hence, the minima are found at $\model_1^*(x) = \mathbb{I}_{\prob(Y=1\mid X=x) \geq \prob(Y=1)}$ and $\model_1^*(x) = \mathbb{I}_{\prob(Y=1\mid X=x) < \prob(Y=1)}$. Notice that for balanced classes, i.e. $\prob(Y=1)=\prob(Y=2)=1/2$, the first minimiser is the same as the risk minimisers for e.g. MAE and Sigmoid loss. Evidently, the information-theoretic loss function does not recover $\truemodel$. In addition, just as with the minima of symmetric loss functions (see proof of \cref{prop:robustness_calibrated}), the asymptotic minima of the information-theoretic loss function are not calibrated in general.

%\subsubsection*{References}
\printbibliography %[heading=none]
\end{refsection}

\end{document}